\documentclass{article} 

\usepackage{amsmath, times, amssymb, amsthm}
\usepackage{verbatim, enumerate, color}
\usepackage{hyperref, url, graphicx}
\usepackage{mathtools, wrapfig}
\usepackage[caption=false]{subfig}
\usepackage{epsfig,multirow}
\usepackage{xcolor, comment, natbib}
\usepackage[margin=1in]{geometry}

\newtheorem*{assume*}{\assumenumber}
\providecommand{\assumenumber}{}
\makeatletter
\newenvironment{assume}[2]
 {%
  \renewcommand{\assumenumber}{Assumption (#1${\bf #2}$)}%
  \begin{assume*}%
  \protected@edef\@currentlabel{#1${#2}$}%
 }
 {%
  \end{assume*}
 }
\makeatother

\DeclareMathOperator{\Var}{Var}
	
\theoremstyle{plain}
\newtheorem{theorem}{Theorem}[section]
\newtheorem{corollary}[theorem]{Corollary}
\newtheorem{lemma}[theorem]{Lemma}

\theoremstyle{definition}
\newtheorem*{definition}{Definition}

\DeclarePairedDelimiter{\ceil}{\lceil}{\rceil}

%\graphicspath{{figs/}}

\begin{document}

\title{Convergence of gradient based pre-training \\ in Denoising autoencoders}
\date{}
\author{Vamsi K. Ithapu$^1$, Sathya Ravi$^2$, Vikas Singh$^{3,1}$\\
       $^1$Computer Sciences \hspace{2mm} 
       $^2$Industrial and Systems \hspace{2mm}
       $^3$Biostat. and Med. Informatics\\ 
       University of Wisconsin-Madison}
\maketitle

\begin{abstract}
The success of deep architectures is at least in part attributed to the layer-by-layer unsupervised pre-training that initializes the network.
Various papers have reported extensive empirical analysis focusing on the design and implementation of good pre-training procedures.
However, an understanding pertaining to the consistency of parameter estimates, the convergence of learning procedures and the sample size estimates is still unavailable in the literature.
In this work, we study pre-training in classical and distributed denoising autoencoders with these goals in mind. 
We show that the gradient converges at the rate of $\frac{1}{\sqrt{N}}$ and has a sub-linear dependence on the size of the autoencoder network.
In a distributed setting where disjoint sections of the whole network are pre-trained synchronously, we show that the convergence improves by at least $\tau^{3/4}$, where $\tau$ corresponds to the size of the sections. 
We provide a broad set of experiments to empirically evaluate the suggested behavior.
\end{abstract}

\section{Introduction}
\label{sec:intro}

In the last decade, deep learning models have provided state of the art results for a broad spectrum of problems in computer vision \cite{krizhevsky2012imagenet, taigman2014deepface}, natural language processing \cite{socher2011dynamic, socher2011semi}, machine learning \cite{hamel2010learning, dahl2011large} and biomedical imaging \cite{plis2013deep}. 
The underlying {\em deep} architecture with multiple layers of hidden variables allows for learning high-level representations which fall beyond the hypotheses space of ({\em shallow}) alternatives \cite{bengio2009learning}.
This representation-learning behavior is attractive in many applications where setting up a suitable feature engineering pipeline that captures the discriminative content of the data remains difficult, but is critical to the overall performance.
Despite many desirable qualities, the richness afforded by multiple levels of variables and the non-convexity of the learning objectives makes training deep architectures challenging. 
An interesting solution to this problem proposed in \cite{hinton2006reducing, bengio2007greedy} is a hybrid two-stage procedure. 
The first step performs a layer-wise unsupervised learning, referred to as ``pre-training'', which provides a suitable initialization of the parameters. 
With this warm start, the subsequent discriminative (supervised) step simply {\it fine-tunes} the network with an appropriate loss function.
Such procedures broadly fall under two categories -- restricted Boltzmann machines and autoencoders \cite{bengio2009learning}. 
Extensive empirical evidence has demonstrated the benefits of this strategy, and the recent success of deep learning is at least partly attributed to pre-training \cite{bengio2009learning, erhan2010does, coates2011analysis}.

Given this role of pre-training, there is significant interest in understanding precisely what the unsupervised phase does and why it works well.
Several authors have provided interesting explanations to these questions.  
\cite{bengio2009learning} interprets pre-training as providing the downstream optimization with a suitable initialization. 
\cite{erhan2009difficulty, erhan2010does} presented compelling empirical evidence that pre-training serves as an ``unusual form of regularization'' which biases the parameter search by minimizing variance. 
The influence of the network structure (lengths of visible and hidden layers) and optimization methods on the pre-training estimates have been well studied \cite{coates2011analysis, ngiam2011optimization}.
\cite{dahl2011large} evaluate the role of pre-training for DBN-HMMs as a function of sample sizes and discuss the regimes which yield the maximum improvements in performance.
A related but distinct set of results describe procedures that construct ``meaningful'' data representations. 
Denoising autoencoders \cite{vincent2010stacked} seek representations that are invariant to data corruption, while contractive autoencoders (CA) \cite{rifai2011contractive} seek robustness to data variations. 
The manifold tangent classifier \cite{rifai2011manifold} searches for low dimensional non-linear sub-manifold that approximates the input distribution. 
Other works have shown that with a suitable architecture, even a random initialization seems to give impressive performance \cite{saxe2011random}. 
Very recently, \cite{livni2014computational, bianchini2014nnls} have analyzed the complexity of multi-layer neural networks, theoretically justifying that certain types of deep networks learn complex concepts. 
While the significance of the results above cannot be overemphasized, our current understanding of the conditions under which pre-training is {\it guaranteed} to work well is still not very mature. 
Our goal here is to complement the above body of work by deriving specific conditions under which this pre-training procedure will have convergence guarantees. 

To keep the presentation simple, we restrict our attention to a widely used form of pre-training --- Denoising autoencoder --- as a sandbox to develop our main ideas, while noting that a similar style of analysis is possible for other (unsupervised) formulations also. 
Denoising auto-encoders (DA) seek robustness to partial destruction (or corruption) of the inputs, implying that a {\it good} higher level representation must characterize only the `stable' dependencies among the data dimensions (features) and remain invariant to small variations \cite{vincent2010stacked}. 
Since the downstream layers correspond to increasingly non-linear compositions, the layer-wise unsupervised pre-training with DAs gives increasingly abstract representations of the data as the depth (number of layers) increases. 
These non-linear transformations (e.g., sigmoid functions) make the objective non-convex, and so DAs are typically optimized via a stochastic gradients. 
Recently, large scale architectures have also been successfully trained in a massively distributed setting where the stochastic descent is performed asynchronously %(i.e., lock-free manner) 
over a cluster \cite{dean2012large}.  
The empirical evidence regarding the performance of this scheme is compelling. 
The analysis in this paper is an attempt to understand this behavior on the theoretical side (for both classical and distributed DA), and identify situations where such constructions {\em will} work well with certain guarantees. 

We summarize the main {\bf contributions} of this paper. 
We first derive {\em convergence results and the associated sample size estimates} of pre-training a single layer DA using the randomized stochastic gradients \cite{ghadimi2013stochastic}. 
We show that the convergence of expected gradients is $\mathcal{O}\left(\frac{(d_hd_v)^{3/4}}{\sqrt{N}}\right)$
and the number of calls (to a first order oracle) is $\mathcal{O}\left(\frac{(d_hd_v)^{3/2}}{\epsilon^2}\right)$, where $d_h$ and $d_v$ correspond to the number of hidden and visible layers, $N$ is the number of iterations, and $\epsilon$ is an error parameter. 
We then show that the DA objective can be distributed and present improved rates while learning small fractions of the network synchronously. 
These bounds provide a nice relationship between the sample size, asymptotic convergence of gradient norm (to zero) and the number of hidden/visible units. 
Our results extend easily to stacked and convolutional denoising auto-encoders. 
Finally, we provide sets of experiments to evaluate if the results are meaningful in practice. 

\section{Preliminaries}
\label{sec:prelim}

Autoencoders are single layer neural networks that learn over-complete representations by applying nonlinear transformations on the input data \cite{vincent2010stacked, bengio2009learning}.
Given an input ${\bf x}$, an autoencoder identifies representations of the form  ${\bf h} = \sigma({\bf W}{\bf x})$, where ${\bf W}$ is a $d_h \times d_v$ transformation matrix and $\sigma$ denotes point--wise sigmoid nonlinearity. 
Here, $d_v$ and $d_h$ denote the lengths of visible and hidden layers respectively.
Various types of autoencoders are possible depending on the assumptions that generate the ${\bf h}$'s --- robustness to data variations/corruptions, enforcing data to lie on some low-dimensional sub-manifolds etc. \cite{rifai2011contractive, rifai2011manifold}. 

Denoising autoencoders are widely used class of autoencoders \cite{vincent2010stacked}, that learn higher-level representations by leveraging the inherent correlations/dependencies among input dimensions ($j = 1,\ldots,d_v$), thereby ensuring that ${\bf h}$ is robust to changes in less informative input/visible units. 
This is based on the hypothesis that abstract high-level representations should only encode stable data dependencies across input dimensions, and be robust to spurious correlations and invariant features.  
This is done by `corrupting' each individual visible dimension randomly, and using the corrupted version ($\tilde{{\bf x}}$'s) instead, to learn ${\bf h}$'s. 
The corruption generally corresponds to ignoring (setting to $0$) the input signal with some probability (denoted by $\zeta$), although other types of additive/multiplicative corruption may also be used.  
If ${\bf x}_j$ is the input at the $j^{th}$ unit, then the corrupted signal is $\tilde{{\bf x}}_j = {\bf x}_j$ with probability $1-\zeta$ and $0$ otherwise where $j = 1,\ldots,d_v$.
Note that each of the $d_v$ dimensions are corrupted independently with the same probability $\zeta$.
DA pre-training then corresponds to estimating the transformation ${\bf W}$ by minimizing the following objective \cite{bengio2009learning},
%{\small
\begin{equation}\label{eq:damin}
  \min_{{\bf W}} \hspace{3mm} \mathbb{E}_{p({\bf x},{\bf \tilde{x}})} \| {\bf x} - \sigma({\bf W}^T \sigma({\bf W\tilde{x}})) \|^{2}
\end{equation}
%}
where the expectation is over the joint probability $p({\bf x},\tilde{\bf x})$ of sampling an input ${\bf x} \sim \mathcal{D}$ and generating the corresponding $\tilde{\bf x} | {\bf x}$ using $\zeta$. 
The bias term (which is never corrupted) is taken care of by appending inputs ${\bf x}$ with $1$. 

For notational simplicity, let us denote the process of generating $\{{\bf x},\tilde{{\bf x}}\}$ by a random variable $\eta$, i.e., one sample of $\eta$ corresponds to a pair $\{{\bf x},\tilde{{\bf x}}\}$ where $\tilde{{\bf x}}$ is constructed by randomly corrupting each of the $d_v$ dimensions of ${\bf x}$ with some probability $\zeta$.
Then, if the reconstruction loss is $\mathcal{L}(\eta; {\bf W}) := \mathcal{L}(\tilde{{\bf x}},{\bf x}; {\bf W}) = \| {\bf x} - \sigma({\bf W}^T \sigma({\bf W}\tilde{{\bf x}})) \|^2$, the objective in (\ref{eq:damin}) becomes
%{\small
\begin{equation}\label{eq:daminrsg}
\min_{{\bf W}} \hspace{3mm} f({\bf W}) := \mathbb{E}_{\eta} \mathcal{L}(\eta;{\bf W})
\end{equation} 
%}
Observe that the loss $\mathcal{L}(\eta; {\bf W})$ and the objective in (\ref{eq:daminrsg}) constitutes an expectation over the randomly corrupted sample pairs $\eta := \{{\bf x},\tilde{\bf x} \}$, which is non-convex. 
Analyzing convergence properties of such an objective using classical techniques, especially in a (distributed) stochastic gradient setup, is difficult. 
%On the other hand, the literature is fairly mature for {\em convex} objectives. 
Therefore, given that the loss function is a composition of sigmoids, one possibility is to adopt 
convex variational relaxations of sigmoids in (\ref{eq:daminrsg}) and then apply standard convex analysis. 
But non-convexity is, in fact, the most interesting aspect of deep architectures, 
and so the analysis of a loose convex relaxation will be unable to explain the empirical success of DAs, and deep learning in general.

{\em High Level Idea.}
The starting point of our analysis is a very recent result on stochastic gradients which only makes a weaker assumption of Lipschitz differentiability of the objective (rather than convexity). 
We assume that the optimization of (\ref{eq:daminrsg}) proceeds by querying a stochastic first order oracle ($\mathcal{SFO}$), which provides noisy gradients of the objective function. 
For instance, the $\mathcal{SFO}$ may simply compute a noisy gradient with a single sample $\eta^k := \{{\bf x_s},\tilde{{\bf x}}_s\}$ at the $k^{th}$ iteration and use that alone to evaluate $\nabla_{{\bf W}} \mathcal{L}(\eta^k;{\bf W}^k)$.
The main idea adapted from \cite{ghadimi2013stochastic} to our problem is to express the stopping criterion for the gradient updates by a probability distribution $\mathbb{P}_R(\cdot)$ over iterations $k$, i.e., the stopping iteration is $k \sim \mathbb{P}_R(\cdot)$ (and hence the name randomized stochastic gradients, RSG). 
Observe that this is the only difference from classical stochastic gradients used in pre-training, where the stopping criterion is assumed to be the last iteration.
RSG will offer more useful theoretical properties, and is a negligible practical change to existing implementations. 
This then allows us to compute the expectation of the gradient norm, where the expectation is over stopping iterations sampled according to $\mathbb{P}_R(\cdot)$. 
For our case, the updates are given by,
%{\small
\begin{equation}\label{eq:rsgupdate}
{\bf W}^{k+1} \leftarrow {\bf W}^{k} - \gamma^{k} G(\eta^k;{\bf W}^k)
\end{equation}
%}
where, $G(\eta^k;{\bf W}^k) = \nabla_{{\bf W}} \mathcal{L}(\eta^k;{\bf W}^k)$ is the noisy gradient computed at $k^{th}$ iteration ($\gamma^k$ is the stepsize).
We have flexibility in specifying the distribution of stopping criterion $\mathbb{P}_R(\cdot)$. 
It can be fixed a priori or selected by a hyper-training procedure that chooses the best $\mathbb{P}_R(\cdot)$ (based on an accuracy measure) from a pool of distributions $\mathcal{P}_R$. 
%(described in more details in Section \ref{sec:uppersamp}).
With these basic tools in hand, we first compute the expectation of gradients where the expectation accounts for both the stopping criterion $k \sim \mathbb{P}_R(\cdot)$ and $\eta := \{{\bf x},\tilde{\bf x} \}$.
We show that if the stepsizes $\gamma^k$ in (\ref{eq:rsgupdate}) are chosen carefully, the expected gradients decrease monotonically and converge. 
Based on this analysis, we derive the rate of convergence and corresponding sample size estimates for DA pre-training. 
We describe the one--layer DA (i.e., with one hidden layer) in detail, and all our results extend easily to the stacked and convolutional settings since the pre-training is done layer-wise in multi-layer architectures.

\section{Denoising Autoencoders (DA) pre-training}
\label{sec:uppersamp}

We first present some results on the continuity and boundedness of the objective $f({\bf W})$ in (\ref{eq:daminrsg}), followed by the convergence rates for the optimization. 
Denote the element in $i^{th}$ row and $j^{th}$ column of ${\bf W}$ by ${\bf W}_{ij}$ where $i=1,\ldots,d_{v}$ and $j=1,\ldots,d_h$. 
We require the following Lipschitz continuity assumptions on $\mathcal{L}(\eta;{\bf W}_{ij})$ and the gradient $\nabla_{{\bf W}_{ij}}f({\bf W}_{ij})$, which are fairly common in numerical optimization. $L$ and $L'$ are Lipschitz constants. 
%{\small 
 \begin{assume}{A}{1} \label{as:1}
   %{\small 
   \begin{equation*}\label{as:1}
    %\hspace*{-3cm} \text{\bf Assumption (A1).} \hspace{5mm} 
    \| \mathcal{L}(\eta;{\bf W}_{ij}) - \mathcal{L}(\eta;\hat{{\bf W}}_{ij}) \| \leq L \| {\bf W}_{ij} - \hat{{\bf W}}_{ij}\| \quad \forall  i,j 
  \end{equation*}%}
\end{assume}
\begin{assume}{A}{2} \label{as:2}
   %{\small 
   \begin{equation*}\label{as:2}
    %\hspace*{-3cm} \text{\bf Assumption (A2).} \hspace{5mm} 
    \| \nabla_{{\bf W}_{ij}}f({\bf W}_{ij}) - \nabla_{{\bf W}_{ij}}f(\hat{{\bf W}}_{ij}) \| \leq L' \| {\bf W}_{ij} - \hat{{\bf W}}_{ij}\| \quad \forall  i,j, 
  \end{equation*}%}
\end{assume}%}
We see from (\ref{eq:damin}) that $\mathcal{L}(\eta;{\bf W}_{ij})$ is symmetric in ${\bf W}_{ij}, \forall i,j$.
Depending on where ${\bf W}_{ij}$ is located in the parameter space (and the variance of each data dimension $j$), each $\mathcal{L}(\eta;{\bf W}_{ij})$ corresponds to some $L_{ij}$, and $L$ will then be the maximum of all such $L_{ij}$'s (similarly for $L'$). 

Based on the definition of $G(\eta^k;{\bf W}^k)$ and (\ref{eq:daminrsg}), we see that the noisy gradients $G(\eta^k;{\bf W}^k)$ are unbiased estimates of the true gradient since $\nabla_{{\bf W}}f({\bf W}^k) = \mathbb{E}_{\eta^k} G(\eta^k;{\bf W}^k)$. 
To compute the expectation of the gradients, $\nabla_{{\bf W}}f({\bf W}^k)$, over the distribution governing whether the process stops at iteration $k$, i.e., $R \sim \mathbb{P}_R(\cdot)$, we first 
state a result regarding the variance of the noisy gradients and the Lipschitz constant of $\nabla_{{\bf W}}f({\bf W}^k)$. 
All proofs are included in the supplement. 

\vspace{2mm}
\begin{lemma}[\bf Variance bound and Lipschitz constant] \label{thm:varlip}
  Using \ref{as:1}, \ref{as:2} and $\nabla_{{\bf W}}f({\bf W}^k) = \mathbb{E}_{\eta^k} G(\eta^k;{\bf W}^k)$, we have
  %
  %{\small
  \begin{equation}\label{eq:varlipbound} 
  \begin{aligned}
  \Var(G(\eta^k;{\bf W}^k)) &\leq d_hd_vL^2 \\ 
  \| \nabla_{{\bf W}} f({\bf W}) - \nabla_{{\bf W}} f(\hat{{\bf W}}) \| &\leq \sqrt{d_hd_v}L' \|{\bf W} - \hat{{\bf W}}\|	
  \end{aligned} 
  \end{equation}
  %}
\end{lemma}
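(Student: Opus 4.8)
The plan is to reduce both inequalities to the scalar, entry-wise assumptions \ref{as:1} and \ref{as:2} by writing every matrix (Frobenius) norm as a sum over the $d_hd_v$ entries of ${\bf W}$, bounding each summand with the per-coordinate Lipschitz constant, and then aggregating. The unbiasedness identity $\nabla_{{\bf W}}f({\bf W}^k)=\mathbb{E}_{\eta^k}G(\eta^k;{\bf W}^k)$ lets me identify the $(i,j)$ entry of the true gradient as $\mathbb{E}_{\eta^k}[G_{ij}]$, so the two displayed bounds become statements about $\sum_{i,j}$ of per-entry variances and per-entry gradient differences respectively.

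For the variance bound I would first read $\Var(G(\eta^k;{\bf W}^k))$ as $\mathbb{E}_{\eta^k}\|G(\eta^k;{\bf W}^k)-\nabla_{{\bf W}}f({\bf W}^k)\|^2$, which splits across entries as $\sum_{i,j}\Var(G_{ij})$. Since $\mathcal{L}(\eta;{\bf W}_{ij})$ is $L$-Lipschitz in the scalar ${\bf W}_{ij}$ by \ref{as:1} and $\mathcal{L}$ is a smooth composition of sigmoids, the partial derivative $G_{ij}=\nabla_{{\bf W}_{ij}}\mathcal{L}(\eta;{\bf W})$ obeys $|G_{ij}|\le L$ almost surely. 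A bounded random variable has variance no larger than its second moment, so $\Var(G_{ij})\le\mathbb{E}[G_{ij}^2]\le L^2$; summing the $d_hd_v$ entries gives $\Var(G(\eta^k;{\bf W}^k))\le d_hd_vL^2$.

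For the Lipschitz constant of $\nabla_{{\bf W}}f$ I would expand $\|\nabla_{{\bf W}}f({\bf W})-\nabla_{{\bf W}}f(\hat{{\bf W}})\|^2=\sum_{i,j}\big(\nabla_{{\bf W}_{ij}}f({\bf W})-\nabla_{{\bf W}_{ij}}f(\hat{{\bf W}})\big)^2$ and control each summand with \ref{as:2}, using that $L'$ is the maximum of the per-coordinate constants. Bounding every one of the $d_hd_v$ terms by $(L')^2\|{\bf W}-\hat{{\bf W}}\|^2$ and taking the square root yields the factor $\sqrt{d_hd_v}$, i.e. $\|\nabla_{{\bf W}}f({\bf W})-\nabla_{{\bf W}}f(\hat{{\bf W}})\|\le\sqrt{d_hd_v}\,L'\|{\bf W}-\hat{{\bf W}}\|$.

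The step I expect to be most delicate is the passage from the per-coordinate assumptions to the full-matrix statements, because it fixes where the dimension factors enter. For the variance the bound $|G_{ij}|\le L$ must hold uniformly so that the entry-wise variances accumulate to exactly $d_hd_vL^2$; for the Lipschitz bound the correct reading is to control each scalar partial $\nabla_{{\bf W}_{ij}}f$ against the \emph{full} displacement $\|{\bf W}-\hat{{\bf W}}\|$, so that squaring and summing $d_hd_v$ copies produces $\sqrt{d_hd_v}$ rather than $1$ or $d_hd_v$. I would also verify that differentiability of $\mathcal{L}$ in each coordinate (guaranteed by the sigmoidal composition) legitimizes turning the Lipschitz bound of \ref{as:1} into the pointwise gradient bound $|G_{ij}|\le L$ used above.
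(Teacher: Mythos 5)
Your proposal is correct and follows essentially the same route as the paper: the mean value theorem (differentiability of the sigmoid composition) together with \ref{as:1} gives the pointwise bound $|G_{ij}|\le L$, entry-wise variances then sum to $d_hd_vL^2$, and the Lipschitz bound comes from applying \ref{as:2} per coordinate, enlarging each per-coordinate displacement to the full Frobenius norm, and summing the $d_hd_v$ squared terms before taking the square root. The delicate points you flag (uniformity of $|G_{ij}|\le L$ and where the $\sqrt{d_hd_v}$ factor enters) are exactly the steps the paper's proof executes.
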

\begin{proof}
Recall that the assumptions ${\bf [A1]}$ and ${\bf [A2]}$ are,
\begin{equation*} \begin{split}
& {\bf [A1]} \hspace{5mm} \| \mathcal{L}(\eta;{\bf W}_{ij}) - \mathcal{L}(\eta;\hat{{\bf W}}_{ij}) \| \leq L \| {\bf W}_{ij} - \hat{{\bf W}}_{ij}\| \hspace{2mm} \forall \hspace{2mm} i,j \\
%\end{equation*} \begin{equation*}
& {\bf [A2]} \hspace{5mm} \| \nabla_{{\bf W}_{ij}}f({\bf W}_{ij}) - \nabla_{{\bf W}_{ij}}f(\hat{{\bf W}}_{ij}) \| \leq L' \| {\bf W}_{ij} - \hat{{\bf W}}_{ij}\| \hspace{2mm} \forall \hspace{2mm} i,j 
\end{split} \end{equation*} 
The noisy gradient is defined as $G(\eta^k;{\bf W}^k) = \nabla_{{\bf W}} \mathcal{L}(\eta^k;{\bf W}^k)$.
Using the mean value theorem and {\bf [A1]}, we have $|G(\eta^k;{\bf W}_{ij}^k)| \leq L$.
This implies that the maximum variance of $G(\eta^k;{\bf W}_{ij}^k)$ is $L^2$.
We can then obtain the following upper bound on the variance of $G(\eta^k;{\bf W}^k)$,
\begin{equation}\label{eq:varbound} \begin{split}
\mathbb{E}_{\eta^k} (\| G(\eta^k;{\bf W}^k) - \nabla_{{\bf W}} f({\bf W}^k)\|^2) & = \mathbb{E}_{\eta^k} (\sum_{ij} (G(\eta^k;{\bf W}_{ij}^k) - \nabla_{{\bf W}_{ij}} f({\bf W}_{ij}^k))^2) \\
& = \sum_{ij} Var(G(\eta^k;{\bf W}_{ij}^k)) \leq d_hd_vL^2
\end{split} \end{equation}
Using {\bf [A2]}, we have 
\begin{equation}\label{eq:lipbound} \begin{split}
\| \nabla_{{\bf W}} f({\bf W}) - \nabla_{{\bf W}} f(\hat{{\bf W}}) \|^2 & = \sum_{i,j} \| \nabla_{{\bf W}_{ij}} f({\bf W}_{ij}) - \nabla_{{\bf W}_{ij}} f(\hat{{\bf W}}_{ij}) \|^2 \\
& \leq \sum_{i,j} (L'_{ij})^2 \| {\bf W}_{ij} - \hat{{\bf W}}_{ij} \|^2 \leq \sum_{i,j} (L'_{ij})^2 \sum_{i',j'}\| {\bf W}_{i'j'} - \hat{{\bf W}}_{i'j'} \|^2 \\
& \leq d_hd_vL'^2 \|{\bf W} - \hat{{\bf W}}\|^2	
\end{split} \end{equation}
where the equality follows from the definition of $\ell_2$-norm. The second inequality is from ${\bf [A2]}$.
The last two inequalities use the definition of $\ell_2$-norm and that $L'$ is the maximum of all $L'_{ij}$s.
\end{proof}

Whenever the inputs ${\bf x}$ are bounded between $0$ and $1$, $f({\bf W})$ is finite-valued everywhere and there exists a minimum due to the bounded range of sigmoid in (\ref{eq:damin}).
Also, $f(\cdot)$ is analytic with respect to ${\bf W}_{ij} \forall i,j$. 
Now, if one adopts the RSG scheme for the optimization, using Lemma \ref{thm:varlip}, we have the following upper bound on the expected gradients for the one--layer DA pre-training in (\ref{eq:daminrsg}). 

\vspace{2mm}
\begin{lemma}[\bf Expected gradients of one--layer DA] \label{thm:expgrad}
  Let $N \geq 1$ be the maximum number of RSG iterations with step sizes $\gamma^k < \frac{2}{L'\sqrt{d_hd_v}}$. Let $\mathbb{P}_R(\cdot)$ be given as
  %{\small 
  \begin{equation} \label{eq:pmf} \begin{aligned}
    \mathbb{P}_R(k) := &Pr(R = k) \\ 
    &= \frac{2\gamma^k - L'\sqrt{d_hd_v}(\gamma^k)^2}{\sum_{k=1}^{N} \left(2\gamma^k - L'\sqrt{d_hd_v}(\gamma^k)^2\right)} 
  \end{aligned} \end{equation}%}
  where $k = 1,\ldots,N$. If $D_f = 2(f({\bf W}^1) - f^*)$, we have
  %{\small 
  \begin{equation} \label{eq:upperbound} \begin{aligned}
  \mathbb{E} &\left(\| \nabla_{\bf W} f({\bf W}^R)\|^2\right) \hspace{4mm} \\ 
  &\leq 
  \frac{D_f + \left(\sqrt{d_hd_v}\right)^{3}L^2L' \sum_{k=1}^N(\gamma^k)^2 }{\sum_{k=1}^N \left(2\gamma^k - L'\sqrt{d_hd_v}(\gamma^k)^2\right)}
  \end{aligned} 
  \end{equation}%} 
\end{lemma}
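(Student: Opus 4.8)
The plan is to adapt the randomized stochastic gradient argument of \cite{ghadimi2013stochastic} to the DA objective, feeding it the smoothness constant $\sqrt{d_hd_v}L'$ and variance bound $d_hd_vL^2$ supplied by Lemma \ref{thm:varlip}. First I would invoke the descent inequality implied by Lipschitz continuity of the gradient: since $\nabla_{\bf W}f$ is $\sqrt{d_hd_v}L'$-Lipschitz by Lemma \ref{thm:varlip}, for consecutive iterates
\[
f({\bf W}^{k+1}) \leq f({\bf W}^k) + \langle \nabla_{\bf W}f({\bf W}^k), {\bf W}^{k+1}-{\bf W}^k\rangle + \tfrac{\sqrt{d_hd_v}L'}{2}\|{\bf W}^{k+1}-{\bf W}^k\|^2.
\]
Substituting the update rule (\ref{eq:rsgupdate}), namely ${\bf W}^{k+1}-{\bf W}^k = -\gamma^k G(\eta^k;{\bf W}^k)$, converts this into a per-step inequality involving the cross term $\langle \nabla_{\bf W}f, G\rangle$ and the squared norm $\|G\|^2$.

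Next I would take the conditional expectation over $\eta^k$ given ${\bf W}^k$. Because the noisy gradients are unbiased, $\mathbb{E}_{\eta^k}G(\eta^k;{\bf W}^k)=\nabla_{\bf W}f({\bf W}^k)$, so the cross term collapses to $-\gamma^k\|\nabla_{\bf W}f({\bf W}^k)\|^2$; and the quadratic term is handled through the bias-variance split $\mathbb{E}_{\eta^k}\|G\|^2 = \|\nabla_{\bf W}f({\bf W}^k)\|^2 + \Var(G(\eta^k;{\bf W}^k)) \leq \|\nabla_{\bf W}f({\bf W}^k)\|^2 + d_hd_vL^2$, the last inequality being the variance bound of Lemma \ref{thm:varlip}. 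Rearranging gives the key recursion
\[
\bigl(\gamma^k - \tfrac{\sqrt{d_hd_v}L'}{2}(\gamma^k)^2\bigr)\|\nabla_{\bf W}f({\bf W}^k)\|^2 \leq f({\bf W}^k) - \mathbb{E}_{\eta^k}f({\bf W}^{k+1}) + \tfrac{(d_hd_v)^{3/2}L^2L'}{2}(\gamma^k)^2,
\]
where the $(d_hd_v)^{3/2}$ factor comes from multiplying the smoothness constant $\sqrt{d_hd_v}L'$ against the variance bound $d_hd_vL^2$.

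Then I would sum over $k=1,\ldots,N$ and take the full expectation over the trajectory $\eta^1,\ldots,\eta^N$. The right-hand side telescopes, so $\sum_k (f({\bf W}^k)-\mathbb{E}f({\bf W}^{k+1})) = f({\bf W}^1)-\mathbb{E}f({\bf W}^{N+1}) \leq f({\bf W}^1)-f^*$; multiplying through by $2$ produces exactly the numerator $D_f + (\sqrt{d_hd_v})^3 L^2L'\sum_k(\gamma^k)^2$ on the right and the weighted sum $\sum_k\bigl(2\gamma^k - \sqrt{d_hd_v}L'(\gamma^k)^2\bigr)\mathbb{E}\|\nabla_{\bf W}f({\bf W}^k)\|^2$ on the left. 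Finally, the definition of $\mathbb{P}_R(\cdot)$ in (\ref{eq:pmf}) is engineered so that this weighted sum is precisely the unnormalized expectation of $\|\nabla_{\bf W}f({\bf W}^R)\|^2$ over the stopping index $R$; dividing both sides by the normalizer $\sum_k\bigl(2\gamma^k - \sqrt{d_hd_v}L'(\gamma^k)^2\bigr)$ yields (\ref{eq:upperbound}).

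The bookkeeping I would watch most carefully is twofold: keeping the powers of $d_hd_v$ straight (smoothness scales as $\sqrt{d_hd_v}$, variance as $d_hd_v$, and their product as $(d_hd_v)^{3/2}$), and verifying that the step-size restriction $\gamma^k < 2/(L'\sqrt{d_hd_v})$ forces every weight $2\gamma^k - \sqrt{d_hd_v}L'(\gamma^k)^2$ to be strictly positive. This positivity is the one genuine structural requirement of the argument: it is what makes $\mathbb{P}_R(\cdot)$ a legitimate probability mass function and what guarantees that dividing by the normalizer preserves the direction of the inequality. The remaining steps are the routine telescoping of a nonconvex stochastic-gradient descent recursion.
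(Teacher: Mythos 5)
Your proposal is correct and follows essentially the same route as the paper's proof, which likewise adapts the Ghadimi--Lan argument: descent lemma with smoothness constant $\sqrt{d_hd_v}L'$, substitution of the RSG update, elimination of the cross term by unbiasedness, bounding the noise term by the variance bound $d_hd_vL^2$, telescoping against $f^*$, and reading off the bound from the engineered weights of $\mathbb{P}_R(\cdot)$. The only (immaterial) difference is ordering --- you take conditional expectations per step and then sum, while the paper sums the pathwise inequality first and then takes expectation over the trajectory via the explicit decomposition $\delta^k = G(\eta^k;{\bf W}^k)-\nabla_{\bf W} f({\bf W}^k)$.
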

\begin{proof} 
  Broadly, this proof emulates the proof of Theorem 2.1 in \cite{ghadimi2013stochastic} with several adjustments. 
  The Lipschitz continuity assumptions (refer to ${\bf [A1]}$ and ${\bf [A2]}$) give the following bounds on the variance of $G(\eta^k;{\bf W}^k)$ and the Lipschitz continuity of $\nabla_{{\bf W}} f({\bf W})$ 
  (refer to Lemma \ref{thm:varlip}),
  \begin{equation} \begin{aligned}\label{eq:varlipbounds}
      \Var(G(\eta^k;{\bf W}^k)) &\leq d_hd_vL^2 \\
      \| \nabla_{{\bf W}} f({\bf W}) - \nabla_{{\bf W}} f(\hat{{\bf W}}) \| &\leq \sqrt{d_hd_v}L' \|{\bf W} - \hat{{\bf W}}\| \\
    \end{aligned} \end{equation}
  Using the properties of Lipschitz continuity we have,
  \begin{equation*} 
    f({\bf W}^{k+1}) \leq f({\bf W}^{k}) + \langle \nabla_{\bf W} f({\bf W}^{k}),{\bf W}^{k+1}-{\bf W}^{k} \rangle + \frac{\sqrt{d_hd_v}L'}{2} \|{\bf W}^{k+1}-{\bf W}^{k}\|^2     %(\gamma^k)^2 \|G(\eta^k;{\bf W}^k)\|^2 
\end{equation*}
Since the update of ${\bf W}^k$ using the noisy gradient is ${\bf W}^{k+1} \leftarrow {\bf W}^k - \gamma^k G(\eta^k;{\bf W}^k)$, where $\gamma^k$ is the step--size, we then have,
\begin{equation*} 
f({\bf W}^{k+1}) \leq f({\bf W}^{k}) - \gamma^k \langle \nabla_{\bf W} f({\bf W}^{k}),G(\eta^k;{\bf W}^k) \rangle + \frac{\sqrt{d_hd_v}L'}{2}(\gamma^k)^2 \|G(\eta^k;{\bf W}^k)\|^2 
\end{equation*}
By denoting $\delta^k := G(\eta^k;{\bf W}^k) - \nabla_{\bf W} f({\bf W}^{k})$,
\begin{equation*} \begin{aligned}
f({\bf W}^{k+1}) &\leq f({\bf W}^{k}) - \gamma^k \|\nabla_{\bf W} f({\bf W}^{k})\|^2 - \gamma^k \langle \nabla_{\bf W} f({\bf W}^{k}),\delta^k \rangle \\
& \hspace{5mm} + \frac{\sqrt{d_hd_v}L'}{2}(\gamma^k)^2 \biggl( \|\nabla_{\bf W} f({\bf W}^{k})\|^2 + 2\langle \nabla_{\bf W} f({\bf W}^{k}),\delta^k \rangle + \|\delta^k\|^2 \biggr) \\
\end{aligned} \end{equation*}
Rearranging terms on the right hand side above,
\begin{equation*} \begin{aligned}
f({\bf W}^{k+1}) &\leq f({\bf W}^{k}) - \left( \gamma^k - \frac{\sqrt{d_hd_v}L'}{2}(\gamma^k)^2 \right) \|\nabla_{\bf W} f({\bf W}^{k})\|^2 \\
& \hspace{5mm} - \biggl( \gamma^k - \sqrt{d_hd_v}L'(\gamma^k)^2 \biggr) \langle \nabla_{\bf W} f({\bf W}^{k}),\delta^k \rangle + \frac{\sqrt{d_hd_v}L'}{2} (\gamma^k)^2 \|\delta^k\|^2 \\
\end{aligned} \end{equation*}
Summing the above inequality for $k = 1,\ldots,N$, 
\begin{equation} \begin{aligned}\label{eq:sumineq0}
\sum_{k=1}^{N} \left( \gamma^k - \frac{\sqrt{d_hd_v}L'}{2}(\gamma^k)^2 \right) \|f({\bf W}^{k})\|^2 &\leq f({\bf W}^1) - f({\bf W}^{N+1}) \\
- \sum_{k=1}^{N} \biggl( \gamma^k - \sqrt{d_hd_v}L'(\gamma^k)^2 \biggr) &\langle \nabla_{\bf W} f({\bf W}^{k}),\delta^k \rangle 
+ \frac{\sqrt{d_hd_v}L'}{2} \sum_{k=1}^{N} (\gamma^k)^2 \|\delta^k\|^2 \\
\end{aligned} \end{equation}
where ${\bf W}^0$ is the initial estimate. Using $f^{*} \leq f({\bf W}^{N+1})$, we have,
\begin{equation}\label{eq:sumineq1} \begin{aligned}
\sum_{k=1}^{N} \left( \gamma^k - \frac{\sqrt{d_hd_v}L'}{2}(\gamma^k)^2 \right) \|f({\bf W}^{k})\|^2 &\leq f({\bf W}^1) - f^{*} \\
- \sum_{k=1}^{N} \biggl( \gamma^k - \sqrt{d_hd_v}L'(\gamma^k)^2 \biggr) &\langle \nabla_{\bf W} f({\bf W}^{k}),\delta^k \rangle 
+ \frac{\sqrt{d_hd_v}L'}{2} \sum_{k=1}^{N} (\gamma^k)^2 \|\delta^k\|^2 \\
\end{aligned} \end{equation}
We now take the expectation of the above inequality over all the random variables in the RSG updating process 
-- which include the randomization $\eta$ used for constructing noisy gradients, and the stopping iteration $R \sim \mathbb{P}_R(\cdot)$.
First, note that the stopping criterion is chosen at random with some given probability $\mathbb{P}_R(\cdot)$ and is independent of $\eta$.
Second, recall that the random process $\eta$ is such that the random variable $\eta^{k}$ is independent of $\eta^{k+1}$ for some iteration number $k$, because $\mathcal{SFO}$ selects then randomly.
However, the update point ${\bf W}^{k+1}$ depends on $G(\eta^k;{\bf W}^k)$ (which are functions of the random variables $\eta^k$) from the first to the $k^{th}$ iteration. 
That is, ${\bf W}^{k+1}$ is not independent of ${\bf W}^{k}$, and in fact the updates ${\bf W}^k$ form a Markov process. 
So, we can take the expectation with respect to the joint probability $p(\eta^{[N]},R) = p(\eta^{[N]})p(R)$ where $\eta^{[N]}$ denotes the random process from $\eta^{1}$ until $\eta^N$.
We analyze each of the last two terms on the right hand side of \eqref{eq:sumineq1} by first taking expectation with respect to $\eta^{[N]}$.
The second last term becomes,
\begin{equation}\label{eq:term1} \begin{aligned}
\mathbb{E}_{\eta^{[N]}} & \left[ \sum_{k=1}^{N} \biggl( \gamma^k - \sqrt{d_hd_v}L'(\gamma^k)^2 \biggr) \langle \nabla_{\bf W} f({\bf W}^{k}),\delta^k \rangle \right] \\
& \hspace{5mm} = \sum_{k=1}^{N} \biggl( \gamma^k - \sqrt{d_hd_v}L'(\gamma^k)^2 \biggr) \hspace{2mm} \mathbb{E}_{\eta^{[k]}} (\langle \nabla_{\bf W} f({\bf W}^{k}),\delta^k \rangle | \eta^{1},\ldots,\eta^{k}) = 0 \\
\end{aligned} \end{equation}
where the last equality follows from the definition of $\delta^k = G(\eta^k;{\bf W}^k) - \nabla_{\bf W} f({\bf W}^{k})$ and $\mathbb{E}_{\eta^{k}} (G(\eta^k;{\bf W}^k)) = \nabla_{\bf W} f({\bf W}^{k}))$.
Further, from Equation \ref{eq:varlipbounds} we have $\mathbb{E}_{\eta^{k}} \| \delta^k \|^2 = Var(G(\eta^k;{\bf W}^k)) \leq d_hd_vL^2$. So, the expectation of the last term in \eqref{eq:sumineq1} becomes,
\begin{equation}\label{eq:term2} %\begin{aligned}
\mathbb{E}_{\eta^{[N]}} \left[ \frac{\sqrt{d_hd_v}L'}{2} \sum_{k=1}^{N} (\gamma^k)^2 \|\delta^k\|^2 \right] = \frac{\sqrt{d_hd_v}L'}{2} \sum_{k=1}^{N} (\gamma^k)^2 \mathbb{E}_{\eta^{[N]}} (\|\delta^k\|^2) \leq \frac{(\sqrt{d_hd_v})^3L'L^2}{2} \sum_{k=1}^{N} (\gamma^k)^2 
\end{equation}
Using \eqref{eq:term1} and \eqref{eq:term2} and the inequality in \eqref{eq:sumineq1} we have,
\begin{equation}\label{eq:sumineq2} 
\sum_{k=1}^{N} \left( 2\gamma^k - \sqrt{d_hd_v}L'(\gamma^k)^2 \right) \mathbb{E}_{\eta^{[N]}} \|f({\bf W}^{k})\|^2 
\leq 2(f({\bf W}^1) - f^{*}) + (\sqrt{d_hd_v})^3L'L^2 \sum_{k=1}^{N} (\gamma^k)^2 
\end{equation}
Using the definition of $\mathbb{P}_R(k)$ from Equation \ref{eq:pmf} and denoting $D_f = 2(f({\bf W}^1) - f^*)$, we finally obtain
\begin{equation}\label{eq:finalineq} \begin{aligned}
\mathbb{E}_{R,\eta^{[N]}} (\| \nabla_{\bf W} f({\bf W}^R)\|^2) &= \sum_{k=1}^{N} \frac{(2\gamma^k - L'\sqrt{d_hd_v}(\gamma^k)^2) \mathbb{E}_{\eta^{[N]}} (\| \nabla_{\bf W} f({\bf W}^R)\|^2)}{\sum_{k=1}^{N} (2\gamma^k - L'\sqrt{d_hd_v}(\gamma^k)^2)} \\
& \leq \frac{D_f + (\sqrt{d_hd_v})^{3}L^2L' \sum_{k=1}^N(\gamma^k)^2 }{\sum_{k=1}^N (2\gamma^k - L'\sqrt{d_hd_v}(\gamma^k)^2)}
\end{aligned} \end{equation}
\end{proof}

The expectation in (\ref{eq:upperbound}) is over $\eta$ and $R \sim \mathbb{P}_R(\cdot)$.
Here, $\gamma^k < \frac{2}{L'\sqrt{d_hd_v}}$ ensures that the summations in the denominators of $\mathbb{P}_R(\cdot)$ in (\ref{eq:pmf}) and the bound in (\ref{eq:upperbound}) are positive.
$D_f$ represents a quantity which is twice the deviation of the objective $f({\bf W})$ at the RSG starting point (${\bf W}^1$) from the optimum. 
Observe that the bound in (\ref{eq:upperbound}) is a function of $D_f$ and network parameters, and we will analyze it shortly.

As stated, there are a few caveats that are useful to point out. Since no convexity assumptions are imposed on the loss function,  
Lemma \ref{thm:expgrad} on its own offers no guarantee that the function values decrease as $N$ increases.
In particular, in the worst case, the bound may be loose. 
For instance, when $D_f \approx 0$ (i.e., the initial point is already a good estimate of the stationary point), the upper bound in (\ref{eq:upperbound}) is non--zero. 
Further, the bound contains summations involving the stepsizes, both in the numerator and denominator, indicating that the limiting behavior may be sensitive to the choices of $\gamma^k$.
The following result gives a remedy --- by choosing $\gamma^k$ to be small enough, the upper bound in (\ref{eq:upperbound}) will decrease monotonically as $N$ increases.

\vspace{2mm}
\begin{lemma}[\bf Monotonicity and convergence of expected gradients] \label{thm:convexpgrad}
  By choosing $\gamma^k$ such that
  %{\small 
  \begin{equation}\label{eq:gammabounds}
    \gamma^{k+1} \leq \gamma^k \hspace{2mm} \text{with} \hspace{2mm} \gamma^1 < \frac{1}{L'\sqrt{d_hd_v}}
\end{equation}%} 
the upper bound of expected gradients in (\ref{eq:upperbound}) decreases monotonically. Further, if the sequence for $\gamma^k$ satisfies 
 %{\small 
 \begin{equation}\label{eq:gammabounds}\begin{aligned}
  &\lim_{N \to \infty} \sum_{k=1}^{N} \gamma^{k} \to \infty \hspace{1mm} , \hspace{1mm} \lim_{N \to \infty} \sum_{k=1}^{N} (\gamma^{k})^2 < \infty \hspace{4mm} \\  
  &\text{then} \hspace{2mm} \lim_{N \to \infty} \mathbb{E} (\| \nabla_{\bf W} f({\bf W}^R)\|^2) \to 0
 \end{aligned}\end{equation}%} 
\end{lemma}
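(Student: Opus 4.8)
The plan is to treat the right-hand side of \eqref{eq:upperbound} as an explicit function of $N$, show it is non-increasing and tends to zero, and then apply the squeeze theorem to the nonnegative quantity $\mathbb{E}(\|\nabla_{\bf W}f({\bf W}^R)\|^2)$, which Lemma \ref{thm:expgrad} bounds above by it. To set up, write the bound as $B_N = \frac{D_f + \sum_{k=1}^N b_k}{\sum_{k=1}^N a_k}$ with $a_k = 2\gamma^k - L'\sqrt{d_hd_v}(\gamma^k)^2$ and $b_k = (\sqrt{d_hd_v})^3 L^2 L' (\gamma^k)^2$. First I would verify $a_k>0$ for all $k$: the hypothesis $\gamma^1 < \frac{1}{L'\sqrt{d_hd_v}}$ combined with $\gamma^{k+1}\leq\gamma^k$ gives $L'\sqrt{d_hd_v}\gamma^k < 1$, so $a_k = \gamma^k(2 - L'\sqrt{d_hd_v}\gamma^k) > \gamma^k > 0$. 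This makes both denominators, in \eqref{eq:pmf} and \eqref{eq:upperbound}, positive and $B_N$ well defined.

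For monotonicity, the key observation is that appending one term to both numerator and denominator of such a ratio decreases it exactly when the incremental ratio $b_{N+1}/a_{N+1}$ does not exceed the current value $B_N$; indeed $B_{N+1}-B_N$ has the sign of $b_{N+1}\sum_{k=1}^N a_k - a_{N+1}(D_f + \sum_{k=1}^N b_k)$. I would establish $b_{N+1}/a_{N+1}\leq B_N$ in two steps. First, note $\frac{b_k}{a_k} = \frac{(\sqrt{d_hd_v})^3 L^2 L' \gamma^k}{2 - L'\sqrt{d_hd_v}\gamma^k}$ is an increasing function of $\gamma^k$ (numerator up, denominator down), so since $\gamma^k$ is non-increasing in $k$ the sequence $b_k/a_k$ is non-increasing, giving $b_{N+1}/a_{N+1}\leq b_k/a_k$ for every $k\leq N$. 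Second, use the elementary mediant bound $\frac{\sum_{k=1}^N b_k}{\sum_{k=1}^N a_k}\geq \min_{k\leq N}\frac{b_k}{a_k} = \frac{b_N}{a_N}$ (valid because all $a_k>0$) together with $D_f\geq 0$ to conclude $B_N \geq \frac{\sum_{k=1}^N b_k}{\sum_{k=1}^N a_k}\geq \frac{b_N}{a_N}\geq \frac{b_{N+1}}{a_{N+1}}$. Chaining these inequalities yields $B_{N+1}\leq B_N$.

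For the limit, I would track the two sums separately under the conditions $\sum_k\gamma^k\to\infty$ and $\sum_k(\gamma^k)^2<\infty$. The numerator $D_f + (\sqrt{d_hd_v})^3L^2L'\sum_{k=1}^N(\gamma^k)^2$ converges to the finite constant $D_f + (\sqrt{d_hd_v})^3L^2L'\sum_{k=1}^\infty(\gamma^k)^2$, whereas the denominator $2\sum_{k=1}^N\gamma^k - L'\sqrt{d_hd_v}\sum_{k=1}^N(\gamma^k)^2$ diverges to $+\infty$, its first piece blowing up while the second stays bounded. Hence $B_N\to 0$, and since $0\leq \mathbb{E}(\|\nabla_{\bf W}f({\bf W}^R)\|^2)\leq B_N$, the squeeze theorem delivers $\lim_{N\to\infty}\mathbb{E}(\|\nabla_{\bf W}f({\bf W}^R)\|^2)\to 0$.

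I expect the monotonicity step to be the only genuine obstacle. The limit is a routine separation of sums, but monotonicity requires the right reformulation: the argument hinges on viewing $B_N$ as a $D_f$-shifted mediant of the per-step ratios $b_k/a_k$, verifying those ratios are ordered (which is precisely where $\gamma^{k+1}\leq\gamma^k$ and the sub-unit bound $\gamma^1<\frac{1}{L'\sqrt{d_hd_v}}$ enter), and using $D_f\geq 0$ so the additive shift only strengthens the inequality. A direct attempt to sign $B_{N+1}-B_N$ by brute cross-multiplication tends to stall because $D_f$ is uncontrolled; isolating it as a nonnegative shift is what makes the comparison go through.
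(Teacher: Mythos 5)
Your proof is correct, and it actually establishes a slightly stronger statement than the paper's own argument. The paper first relaxes the bound: since $\gamma^k \leq \gamma^1 < \frac{1}{L'\sqrt{d_hd_v}}$ gives $2 - L'\sqrt{d_hd_v}\gamma^k > 1$, each denominator term satisfies $2\gamma^k - L'\sqrt{d_hd_v}(\gamma^k)^2 > \gamma^k$, so the right side of (\ref{eq:upperbound}) is bounded by $\bigl(D_f + C\sum_{k=1}^N(\gamma^k)^2\bigr)/\sum_{k=1}^N\gamma^k$ with $C = (\sqrt{d_hd_v})^{3}L^2L'$; monotonicity is then proved for this \emph{relaxed} quantity via the same append-one-term comparison $\frac{a+b}{c+d}\leq\frac{a}{c} \iff b \leq \frac{a}{c}d$ that you use, which after cross-multiplication collapses to $C\sum_{k=1}^N\gamma^k(\gamma^{N+1}-\gamma^k)\leq D_f$ --- true because the left side is nonpositive under $\gamma^{N+1}\leq\gamma^k$ and $D_f\geq 0$. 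You instead keep the exact denominator $\sum_k a_k$ and control the incremental ratio with the mediant inequality plus the observation that $b_k/a_k = C\gamma^k/(2-L'\sqrt{d_hd_v}\gamma^k)$ is non-increasing in $k$; this proves monotonicity of the bound \emph{exactly as written} in (\ref{eq:upperbound}), which is literally what the lemma asserts, whereas the paper only shows monotonicity of the weakened bound. The limiting arguments are essentially identical in the two proofs (bounded numerator, divergent denominator, squeeze). One small correction to your closing remark: brute cross-multiplication on the exact bound does not stall. Writing $a_{N+1} = \gamma^{N+1}(2-L'\sqrt{d_hd_v}\gamma^{N+1})$ and expanding, the terms $L'\sqrt{d_hd_v}\gamma^{N+1}(\gamma^k)^2$ cancel, and the condition $b_{N+1}\sum_{k=1}^N a_k \leq a_{N+1}\bigl(D_f+\sum_{k=1}^N b_k\bigr)$ reduces to $2C\sum_{k=1}^N\gamma^k(\gamma^{N+1}-\gamma^k) \leq \bigl(2-L'\sqrt{d_hd_v}\gamma^{N+1}\bigr)D_f$, which holds for the same sign reasons; so the mediant machinery, while clean, is not actually forced by the presence of $D_f$.
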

\begin{proof} 
We first show the monotonicity of the expected gradients followed by its limiting behavior.
Observe that whenever $\gamma^k < \frac{1}{L'\sqrt{d_hd_v}}$, we have 
\begin{equation*} 
\biggl( 2 - L'\sqrt{d_hd_v}\gamma^k \biggr) > 1 \hspace{2mm} \forall \hspace{2mm} k
\end{equation*}
Then the upper bound in \eqref{eq:upperbound} reduces to
\begin{equation} \label{eq:upperbound1}
\mathbb{E} (\| \nabla_{\bf W} f({\bf W}^R)\|^2) \leq \frac{D_f + (\sqrt{d_hd_v})^{3}L^2L' \sum_{k=1}^N(\gamma^k)^2 }{\sum_{k=1}^N \gamma^k} %- L'\sqrt{d_hd_v}(\gamma^k)^2}
\end{equation} 
To show that right hand side in the above inequality decreases as $N$ increases, we need to show the following
\begin{equation}\label{eq:monoclaim}
\frac{D_f + (\sqrt{d_hd_v})^{3}L^2L' \sum_{k=1}^{N+1}(\gamma^k)^2 }{\sum_{k=1}^{N+1} \gamma^k} \leq \frac{D_f + (\sqrt{d_hd_v})^{3}L^2L' \sum_{k=1}^N(\gamma^k)^2 }{\sum_{k=1}^N \gamma^k}
\end{equation}
By denoting the terms in the above inequality as follows,
\begin{equation}\label{eq:terms} \begin{split}
a = D_f + (\sqrt{d_hd_v})^{3}L^2L' &\sum_{k=1}^N(\gamma^k)^2 \\
b = (\sqrt{d_hd_v})^{3}L^2L' (\gamma^{N+1})^2 \hspace{5mm} c &= \sum_{k=1}^N \gamma^k \hspace{5mm} d = \gamma^{N+1} \\
\end{split} \end{equation}
To show that the inequality in Equation \ref{eq:monoclaim} holds,
\begin{equation}\label{eq:monoclaim2} \begin{split}
\frac{a+b}{c+d} &\leq \frac{a}{c} \iff b \leq \frac{a}{c}d \\
&\iff (\sqrt{d_hd_v})^{3}L^2L' (\gamma^{N+1})^2 \leq \frac{D_f + (\sqrt{d_hd_v})^{3}L^2L'\sum_{k=1}^N(\gamma^k)^2}{\sum_{k=1}^N \gamma^k} \gamma^{N+1} \\
\end{split} \end{equation}
Rearranging the terms in the last inequality above, we have
\begin{equation}\label{eq:monoclaim3} \begin{split}
&(\sqrt{d_hd_v})^{3}L^2L' \sum_{k=1}^N \gamma^{N+1}\gamma^{k} \leq 
D_f + (\sqrt{d_hd_v})^{3}L^2L'\sum_{k=1}^N(\gamma^k)^2 \\
& \iff 
(\sqrt{d_hd_v})^{3}L^2L' \sum_{k=1}^N \gamma^{k} (\gamma^{N+1} - \gamma^k) \leq D_f \\
\end{split} \end{equation}
Recall that $D_f = 2(f({\bf W}^0) - f^{*})$; so without loss of generality we always have $D_f \geq 0$. With this result, the last inequality in \eqref{eq:monoclaim3} is always satisfied whenever 
$\gamma^{N+1} \leq \gamma^{k}$ for $k = 1,\ldots,N$. 
Since this needs to be true for all $N$, require $\gamma^{k+1} \leq \gamma^{k}$ for $k = 1,\ldots,N-1$. 
This proves the monotonicity of expected gradients. For the limiting case, recall the relaxed upper bound from \eqref{eq:upperbound1}. 
Whenever $\lim_{N \to \infty} \sum_{k=1}^{N} \gamma^k \to \infty \hspace{1mm} , \hspace{1mm} \lim_{N \to \infty} \sum_{k=1}^{N} (\gamma^k)^2 < \infty$, the right hand side in \eqref{eq:upperbound1} converges to $0$.
\end{proof}

The second part of the lemma is easy to ensure by choosing diminishing step-sizes (as a function of $k$). This result ensures the convergence of expected gradients, 
provides an easy way to construct $\mathbb{P}_R(\cdot)$ based on (\ref{eq:pmf}) and (\ref{eq:gammabounds}), and to decide the stopping iteration based on $\mathbb{P}_R(\cdot)$ ahead of time.

{\em Remarks.} Note that the maximum $\gamma^k$ in (\ref{eq:gammabounds}) needed to ensure the monotonic decrease of expected gradients depends on $L'$.
Whenever the estimate of $L'$ is too loose, the corresponding $\gamma^k$ might be too small to be practically useful.
An alternative in such cases is to compute the RSG updates for some $N$ (fixed a priori) iterations using a reasonably small stepsize, and select $R$ to be the iteration with the smallest possible gradient $\| \nabla_{\bf W} f({\bf W}^k)\|^2$ or the cumulative gradient $\sum_{i=1}^{k}\| \nabla_{\bf W} f({\bf W}^i)\|^2$ among some last $N_1<N$ iterations. 
%\textcolor{red}{We discuss more about the choices of $\gamma^k$s depending on network size ($d_h.d_v$) and the estimates of $L'$ while presenting experiments in Section \ref{sec:exp}}.
While a diminishing stepsize following (\ref{eq:gammabounds}) is ideal, the next result gives the best possible {\em constant} stepsize $\gamma = \gamma^k, \forall k$, and the corresponding rate of convergence.

\vspace{2mm}
\begin{corollary}[\bf Convergence of one--layer DA]\label{thm:conv}
The optimal constant step sizes $\gamma^k$ are given by 
%{\small 
\begin{equation}\label{eq:optstep}
\gamma^k = \frac{D}{\sqrt{N}(d_hd_v)^{3/4}} \hspace{3mm} \forall k; \hspace{2mm} 0 < D \leq \frac{\sqrt{N}}{L'}(d_hd_v)^{1/4}
\end{equation}%}
If we denote $\bar{D} = \frac{D_f}{D} + DL^2L'$, then we have
\begin{equation}\label{eq:conv}
\mathbb{E} (\| \nabla_{\bf W} f({\bf W}^R)\|^2) \leq \bar{D} \frac{(d_hd_v)^{3/4}}{\sqrt{N}} \end{equation}
\end{corollary}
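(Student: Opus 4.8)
The plan is to reduce everything to the single inequality \eqref{eq:upperbound} of Lemma~\ref{thm:expgrad} and then optimize the free constant in a constant step-size schedule. First I would set $\gamma^k=\gamma$ for every $k$, so that $\sum_{k=1}^N(\gamma^k)^2=N\gamma^2$ and $\sum_{k=1}^N\bigl(2\gamma^k-L'\sqrt{d_hd_v}(\gamma^k)^2\bigr)=N\gamma\bigl(2-L'\sqrt{d_hd_v}\,\gamma\bigr)$. Substituting into \eqref{eq:upperbound} gives the one-variable bound
\[
\mathbb{E}\bigl(\|\nabla_{\bf W}f({\bf W}^R)\|^2\bigr)\le\frac{D_f+(d_hd_v)^{3/2}L^2L'\,N\gamma^2}{N\gamma\bigl(2-L'\sqrt{d_hd_v}\,\gamma\bigr)}.
\]
The admissibility condition $\gamma^k<\tfrac{2}{L'\sqrt{d_hd_v}}$ keeps the denominator positive; I would further restrict $\gamma\le\tfrac{1}{L'\sqrt{d_hd_v}}$ (as in Lemma~\ref{thm:convexpgrad}) so that $2-L'\sqrt{d_hd_v}\,\gamma\ge1$ and the denominator is bounded below by $N\gamma$.

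Next I would choose the scale of $\gamma$ to make the rate explicit. Writing $\gamma=\frac{D}{\sqrt{N}(d_hd_v)^{3/4}}$, the $\gamma$-dependent term in the numerator collapses: $(d_hd_v)^{3/2}L^2L'\,N\gamma^2=L^2L'D^2$, leaving the numerator equal to $D_f+L^2L'D^2$ independently of $N$ and of the network size. The relaxed denominator becomes $N\gamma=\frac{\sqrt{N}D}{(d_hd_v)^{3/4}}$, so dividing yields
\[
\mathbb{E}\bigl(\|\nabla_{\bf W}f({\bf W}^R)\|^2\bigr)\le\Bigl(\tfrac{D_f}{D}+D L^2L'\Bigr)\frac{(d_hd_v)^{3/4}}{\sqrt{N}}=\bar D\,\frac{(d_hd_v)^{3/4}}{\sqrt{N}},
\]
which is exactly \eqref{eq:conv}. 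I would also check that the stated range $0<D\le\frac{\sqrt{N}}{L'}(d_hd_v)^{1/4}$ is precisely the restriction $\gamma\le\frac{1}{L'\sqrt{d_hd_v}}$ rewritten in terms of $D$, so that the two hypotheses are mutually consistent.

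To justify the word \emph{optimal} I would minimize the relaxed bound $\frac{D_f}{N\gamma}+(d_hd_v)^{3/2}L^2L'\,\gamma$ over $\gamma>0$; setting its derivative to zero gives $\gamma\propto\frac{1}{\sqrt{N}(d_hd_v)^{3/4}}$, which is exactly the parametric form above, with the minimizing constant $D=\sqrt{D_f/(L^2L')}$. This explains both the $\sqrt{N}$ scaling and the appearance of the $(d_hd_v)^{3/4}$ exponent, and shows that the free parameter $D$ in the statement simply records which admissible multiple of the optimal scale is used. The only place needing genuine care --- the main obstacle --- is the bookkeeping of the $(d_hd_v)$ powers: one must track that the variance contribution enters as $(\sqrt{d_hd_v})^3=(d_hd_v)^{3/2}$ while the Lipschitz term enters only as $\sqrt{d_hd_v}$, since it is the mismatch between these two exponents that forces the $(d_hd_v)^{3/4}$ balance and fixes the final rate.
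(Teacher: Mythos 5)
Your proposal is correct and follows essentially the same route as the paper's own proof: specialize Lemma~\ref{thm:expgrad} to constant stepsizes, use $\gamma\le\frac{1}{L'\sqrt{d_hd_v}}$ to relax the denominator to $N\gamma$, substitute the parametric form $\gamma=\frac{D}{\sqrt{N}(d_hd_v)^{3/4}}$, and identify $\bar D=\frac{D_f}{D}+DL^2L'$, with optimality justified by balancing the two terms (the paper balances first and then names $D$; you parametrize first and then verify optimality, which is the same argument in reverse order). Your consistency check that $0<D\le\frac{\sqrt{N}}{L'}(d_hd_v)^{1/4}$ is exactly $\gamma\le\frac{1}{L'\sqrt{d_hd_v}}$ is in fact slightly cleaner than the paper's version of that step.
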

\begin{proof} 
Using constant stepsizes $\gamma^{k} = \gamma, k = 1,\ldots,N$, the convergence bound in \eqref{eq:upperbound} reduces to
\begin{equation}\label{eq:reduced} %\begin{aligned}
\mathbb{E} (\| \nabla_{\bf W} f({\bf W}^R)\|^2) \leq \frac{D_f + (\sqrt{d_hd_v})^{3}L^2L'N(\gamma)^2 }{N\gamma(2 - L'\sqrt{d_hd_v}\gamma)} %\\
\end{equation}
To achieve monotonic decrease of expected gradients, we require $\gamma^k < \frac{1}{L'\sqrt{d_hd_v}}$ (from \eqref{eq:gammabounds} in Lemma \ref{thm:convexpgrad}). 
For such $\gamma^k$s,  
\begin{equation*} 
\biggl( 2 - L'\sqrt{d_hd_v}\gamma^k \biggr) > 1 \hspace{2mm} \forall \hspace{2mm} k
\end{equation*}
which when used in \eqref{eq:reduced} gives,
\begin{equation}\label{eq:reduced1}
\mathbb{E} (\| \nabla_{\bf W} f({\bf W}^R)\|^2) \leq \frac{D_f}{N\gamma} + (\sqrt{d_hd_v})^{3}L^2L'\gamma
\end{equation}
%& \leq \frac{D_f}{N\gamma^k(2 - L'\sqrt{d_hd_v}\gamma^k)} + \frac{(\sqrt{d_hd_v})^{3}L^2L'}{\frac{2}{\gamma^k} - L'\sqrt{d_hd_v}} \\
Observe that as $\gamma$ increases (resp. decreases), the two terms on the right hand side of above inequality decreases (resp. increases) and increase (resp. decreases).
Therefore, the optimal $\gamma = \gamma^k$ for all $k$, is obtained by balancing these two terms, as in
\begin{equation}\label{eq:balance}
\frac{D_f}{N\gamma} = (\sqrt{d_hd_v})^{3}L^2L'\gamma \implies \gamma = \gamma^k = \frac{\sqrt{D_f}}{\sqrt{NL^2L'}(d_hd_v)^{3/4}}
\end{equation}
However, the above choice of $\gamma^k$ has the unknowns $D_f$, $L'$ and $L^2$ (although note that the later two constants can be empirically estimated by sampling the loss functions $\mathcal{L}(\cdot)$ for different choices of $x$ and ${\bf W}$).
Replacing $\sqrt{\frac{D_f}{L'L^2}}$ by some $D$, the best possible choice constant stepsize is 
\begin{equation}\label{eq:gammak}
\gamma = \gamma^k = \frac{D}{\sqrt{N}(d_hd_v)^{3/4}} \forall \hspace{2mm} k
\end{equation}
Since $\gamma^k$ needs to be smaller than $\frac{1}{L'\sqrt{d_hd_v}}$ as discussed at the start of the proof, we have 
\begin{equation}\label{eq:Dbound}
\frac{D}{\sqrt{N}(d_hd_v)^{3/4}} < \frac{2}{L'\sqrt{d_hd_v}} \implies  D \leq \frac{2\sqrt{N}}{L'}(d_hd_v)^{1/4}
\end{equation}
Now substituting this optimal constant stepsize from \eqref{eq:gammak} into the upper bound in \eqref{eq:reduced1} we get
\begin{equation}\label{eq:reduced2} \begin{aligned}
\mathbb{E} (\| \nabla_{\bf W} f({\bf W}^R)\|^2) & \leq \frac{D_f}{N\gamma} + (d_hd_v)^{3/2}L^2L'\gamma \\
& = \frac{D_f(d_hd_v)^{3/4}}{\sqrt{N}D} + \frac{DL^2L'(d_hd_v)^{3/4}}{\sqrt{N}} \\
\end{aligned} \end{equation}
and by denoting $\bar{D} = \frac{D_f}{D} + DL^2L'$, we finally have
\begin{equation}\label{eq:finalconv}
\mathbb{E} (\| \nabla_{\bf W} f({\bf W}^R)\|^2) \leq \bar{D} \frac{(d_hd_v)^{3/4}}{\sqrt{N}}
\end{equation}
\end{proof}

The upper bound in (\ref{eq:upperbound}) can be written as a summation of two terms, one of which involves $D_f$. 
The optimal stepsize in (\ref{eq:optstep}) is calculated by balancing these terms as $N$ increases (refer to the supplement).
The ideal choice for $D$ is $\sqrt{\frac{D_f}{L^2L'}}$ in which case $\bar{D}$ reduces to $2\sqrt{D_fL'L^2}$.
For a fixed network size ($d_h$ and $d_v$), Corollary \ref{thm:conv} shows that the rate of convergence for one--layer DA pre-training using RSG is $\mathcal{O}(1/\sqrt{N})$.
It is interesting to see that the convergence rate is proportional to $(d_hd_v)^{3/4}$ where the number of parameters of our bipartite network (of which DA is one example) is $d_hd_v$.
%\textcolor{red}{We give additional interpretation of (\ref{eq:conv}) in the context of autoencoders in Section \ref{sec:exp}}.

Corollary \ref{thm:conv} gives the convergence properties of a single RSG run over some $R$ iterations. 
However, in practice one is interested in a large deviation bound, where the best possible solution is selected from multiple independent runs of RSG. 
Such a large deviation estimate is indeed more meaningful than one RSG run because of the randomization over $\eta$ in (\ref{eq:daminrsg}). %, a large deviation estimate is indeed more meaningful than one RSG run.
Consider a $C$--fold RSG with $C\geq 1$ independent RSG estimates of ${\bf W}$ denoted by ${\bf W}^{R_1},\ldots,{\bf W}^{R_C}$.
Using the expected convergence %over many stopping iterations 
from (\ref{eq:conv}), we can compute a $(\epsilon,\delta)$-solution defined as,

\vspace{2mm}
\begin{definition}[\bf $(\epsilon,\delta)$-solution]
For some given $\epsilon>0$ and $\delta \in (0,1)$, an $(\epsilon,\delta)$-solution of one--layer DA is given by $\{{\bf W^{R_c}}\}, c=1,\ldots,C$ such that
%{\small 
\begin{equation} \label{eq:epsdel}
Pr\left( \min_{{1,\ldots,C}} \| \nabla_{\bf W} f({\bf W}^{R_c}) \|^2 \geq \epsilon  \bar{D} \right) \leq \delta
\end{equation}%} 
\end{definition}
$\epsilon$ governs the goodness of the estimate ${\bf W}$, and $\delta$ bounds the probability of good estimates over multiple independent RSG runs.
Since $N$ is the maximum iteration count (i.e., maximum number of $\mathcal{SFO}$ calls), the number of data instances required is $S = N/t$, where $t$ denotes the average number of 
times each instance is used by the oracle. 
Although in practice there is no control over $t$ (in which case, we simply have $S \leq N$), we estimate the required sample size and the minimum number of folds ($C$) in terms of $t$, as shown by the following result.

\vspace{2mm}
\begin{corollary}[\bf Sample size estimates of one--layer DA]\label{thm:samp} 
The number of independent RSG runs ($C$) and the number of data instances ($S$) required to compute a $(\epsilon,\delta)$-solution are given by
\begin{equation}\label{eq:samp}
C(r,\delta) \geq \ceil[\bigg]{\frac{\log(\frac{1}{\delta})}{\log(\sqrt{r})}} \hspace{2mm} ; \hspace{2mm}
S(r,\epsilon) \geq \frac{r(d_hd_v)^{3/2}}{t\epsilon^2} 
\end{equation} 
where $r > 1$ is a given constant, $\ceil[\big]{\cdot}$ denotes ceiling operation and $t$ denotes the average number of times each data instance is used. 
\end{corollary}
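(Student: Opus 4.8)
The plan is to convert the in-expectation guarantee of Corollary~\ref{thm:conv} into a high-probability statement over the $C$ independent folds, using Markov's inequality on a single run followed by independence across runs. The key structural observation is that the threshold in the $(\epsilon,\delta)$-solution is deliberately written as $\epsilon\bar{D}$, so that the factor $\bar{D}$ appearing in the convergence bound \eqref{eq:conv} cancels and leaves a clean dependence on the network size $(d_hd_v)$ and the iteration count $N$.

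First I would fix a single RSG run producing $W^R$ with the optimal constant stepsize of Corollary~\ref{thm:conv}, so that $\mathbb{E}(\|\nabla_{\bf W} f({\bf W}^R)\|^2) \leq \bar{D}(d_hd_v)^{3/4}/\sqrt{N}$. Applying Markov's inequality to the nonnegative random variable $\|\nabla_{\bf W} f({\bf W}^R)\|^2$ at level $\epsilon\bar{D}$ gives
\begin{equation*}
Pr\left(\|\nabla_{\bf W} f({\bf W}^R)\|^2 \geq \epsilon\bar{D}\right) \leq \frac{\mathbb{E}(\|\nabla_{\bf W} f({\bf W}^R)\|^2)}{\epsilon\bar{D}} \leq \frac{(d_hd_v)^{3/4}}{\epsilon\sqrt{N}}.
\end{equation*}
I would then require this per-run failure probability to be at most $1/\sqrt{r}$ for the given $r > 1$; enforcing $(d_hd_v)^{3/4}/(\epsilon\sqrt{N}) \leq 1/\sqrt{r}$ and solving for $N$ yields $N \geq r(d_hd_v)^{3/2}/\epsilon^2$. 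Since each data instance is seen $t$ times on average and $S = N/t$, this reproduces the stated sample-size bound $S(r,\epsilon) \geq r(d_hd_v)^{3/2}/(t\epsilon^2)$.

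Next I would use that the $C$ folds ${\bf W}^{R_1},\ldots,{\bf W}^{R_C}$ come from independent runs (independent corruption randomness $\eta$ and independent stopping indices $R$). Because the event $\min_c \|\nabla_{\bf W} f({\bf W}^{R_c})\|^2 \geq \epsilon\bar{D}$ holds if and only if every fold fails simultaneously, independence factorizes the probability,
\begin{equation*}
Pr\left(\min_{1,\ldots,C}\|\nabla_{\bf W} f({\bf W}^{R_c})\|^2 \geq \epsilon\bar{D}\right) = \prod_{c=1}^{C} Pr\left(\|\nabla_{\bf W} f({\bf W}^{R_c})\|^2 \geq \epsilon\bar{D}\right) \leq \left(\frac{1}{\sqrt{r}}\right)^{C}.
\end{equation*}
Forcing $(1/\sqrt{r})^{C} \leq \delta$ and taking logarithms (both $\log\sqrt{r} > 0$ and $\log(1/\delta) > 0$ since $r>1$ and $\delta\in(0,1)$) gives $C \geq \log(1/\delta)/\log(\sqrt{r})$, and rounding up to an integer produces the claimed $C(r,\delta) \geq \ceil{\log(1/\delta)/\log(\sqrt{r})}$.

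The argument is short once these pieces are assembled; the only real subtlety, and the place where care is needed, is the dual role of the constant $r$. It simultaneously sets the per-run success target (through the threshold $1/\sqrt{r}$) and inflates the sample complexity by the factor $r$, so one must verify that the same $r$ used to size $N$ (and hence $S$) is exactly the one driving the geometric decay across folds. I would also confirm that the chosen constant stepsize remains admissible, i.e. that the $N$ forced by the sample-size requirement is consistent with the constraint $D \leq \frac{2\sqrt{N}}{L'}(d_hd_v)^{1/4}$ from \eqref{eq:Dbound}, so that the bound \eqref{eq:conv} invoked at the outset is genuinely in force.
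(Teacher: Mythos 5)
Your proposal is correct and follows essentially the same route as the paper's own proof: Markov's inequality on a single run (with the $\bar{D}$ cancellation built into the threshold $\epsilon\bar{D}$), independence across the $C$ folds to factor the failure probability into $(1/\sqrt{r})^C$, setting $N = r(d_hd_v)^{3/2}/\epsilon^2$ so the per-run bound equals $1/\sqrt{r}$, and finally $S = N/t$. The only differences are cosmetic — you order the steps slightly differently and add a sanity check on stepsize admissibility that the paper omits.
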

\begin{proof} 
Recall that a $(\epsilon,\delta)$-solution is defined such that 
\begin{equation}\label{eq:epsdel}
Pr\left( \min_{{1,\ldots,C}} \| \nabla_{\bf W} f({\bf W}^{R_c}) \|^2 \geq \epsilon  \bar{D} \right) \leq \delta
\end{equation}
for some given $\epsilon>0$ and $\delta \in (0,1)$. 
Using basic probability properties, 
\begin{equation}\label{eq:epsdelcomp} \begin{aligned}
Pr\left( \min_{{1,\ldots,C}} \| \nabla_{\bf W} f({\bf W}^{R_c}) \|^2 \geq \epsilon  \bar{D} \right) & = Pr\left( \| \nabla_{\bf W} f({\bf W}^{R_c}) \|^2 \geq \epsilon  \bar{D} \hspace{2mm} \forall \hspace{2mm} c = 1,\ldots,C \right) \\
& = \prod_{c=1}^{C} Pr\left( \| \nabla_{\bf W} f({\bf W}^{R_c}) \|^2 \geq \epsilon  \bar{D} \right) \\
\end{aligned} \end{equation}
Using Markov inequality and \eqref{eq:conv}, 
\begin{equation}\label{eq:epsdelcomp2} \begin{aligned}
Pr\left( \| \nabla_{\bf W} f({\bf W}^{R_c}) \|^2 \geq \epsilon  \bar{D} \right) & \leq \frac{\mathbb{E} (\nabla_{\bf W} f({\bf W}^{R_c}) \|^2)}{\epsilon  \bar{D}} \\
& \leq \frac{(d_hd_v)^{3/4}}{\epsilon\sqrt{N}} \\
\end{aligned} \end{equation}
Hence, the number of $\mathcal{SFO}$ calls per RSG is at least $N > \frac{(d_hd_v)^{3/2}}{\epsilon^2}$ for the above probability to make sense. 
If $r>1$ is a constant, then the number of calls per RSG is $N = \frac{r(d_hd_v)^{3/2}}{\epsilon^2}$. 
Using this identity, and \eqref{eq:epsdelcomp} and \eqref{eq:epsdelcomp2}, we get
\begin{equation}\label{eq:epsdelcomp3}
  Pr\left( \min_{{1,\ldots,C}} \| \nabla_{\bf W} f({\bf W}^{R_c}) \|^2 \geq \epsilon  \bar{D} \right) \leq \prod_{c=1}^{C} \frac{1}{\sqrt{r}} = \frac{1}{r^{C/2}}
\end{equation}
To ensure that this probability is smaller than a given $\delta$ and noting that $C$ is a positive integer, we have 
\begin{equation}\label{eq:epsdelcompfin}
\frac{1}{r^{C/2}} \leq \delta \implies C(r,\delta) \geq \frac{log(\frac{1}{\delta})}{log(\sqrt{r})} := \ceil[\bigg]{\frac{log(\frac{1}{\delta})}{log(\sqrt{r})}} 
\end{equation}
where $\ceil[\big]{\cdot}$ denotes ceiling operation. 
Note that there is no randomization over the data instances among multiple instances of RSG ($c = 1,\ldots,C$).
That is, each RSG is going to use all the available data instances. Hence, we can just look at one RSG to derive the sample size required.
Let $S$ be the number of data instances, $t_s$ be the number of times $s^{th}$ instance is used in one RSG and $t = \mathbb{E}(t_s)$ be the average number of times each instance/example is used.
We then have
\begin{equation}\label{eq:sampcomp}
  N = \sum_{s=1}^{S} t_{s} \implies N \approx \mathbb{E}(N) = S\mathbb{E}(t_s) \implies S = \frac{N}{t} \geq \frac{r(d_hd_v)^{3/2}}{t\epsilon^2}
\end{equation}
\end{proof}

The above result shows that the required sample size is $\mathcal{O}(1/\epsilon^2)$, which is easy to see from the convergence rate of $\mathcal{O}(1/\sqrt{N})$ in (\ref{eq:conv}).
The constant $r$ in Corollary \ref{thm:samp} acts like a trade--off parameter between the number of folds $C(r,\delta)$ and the sample size $S(r,\epsilon)$. 
Hence, not surprisingly, more folds are needed to guarantee a $(\epsilon,\delta)$-solution for a smaller $S$.
Note that the minimum possible $S$ is $\frac{(d_hd_v)^{3/2}}{t\epsilon^2}$, below this quantity the idea of computing an $(\epsilon,\delta)$-solution in a large deviation sense is not meaningful (refer to proof of Corollary \ref{thm:samp} in the supplement).

{\em Remarks.} To get a practical sense of (\ref{eq:samp}), consider a DA with $d_v=100, d_h=20$.
According to Corollary \ref{thm:samp}, the number of data instances for computing a $(0.05,0.05)$-solution with $t=10^3$ is at least $0.3$ million. 
Depending on the structural characteristics of the data (variance of each dimension, correlations across multiple dimensions etc.), which we do not exploit, the bound from Corollary \ref{thm:samp} will overestimate the required number of samples, as expected.
Overall, the convergence and sample size bounds in (\ref{eq:conv}) and (\ref{eq:samp}) provide some justification of a behavior which is routinely observed in practice --- large number of unsupervised data instances are required for efficient pre-training of deep architectures (Chapter $4$, \cite{bengio2009learning},\cite{erhan2010does}). 
Note that the results in the convergence bound in (\ref{eq:conv}) do not differentiate between the visible and hidden layers, implying that the bound is symmetric with respect to $d_h$ and $d_v$.
However, there is empirical evidence that the choice of $d_h$ would affect the reconstruction error with oversized networks giving better generalization in general \cite{lawrence1998size, paugam1997size}.
This can be seen by recalling that until $d_h$ is more than the dimensionality of the low-dimensional manifold on which the input data lies, the DA setup may not be able to compute good estimates of ${\bf W}$. We discuss this issue in more detail when presenting our experiments in Section \ref{sec:exp}. 

Recall that pre-training is done layer-wise in deep architectures with multiple hidden layers. 
Hence, the bounds presented above in (\ref{eq:conv}) and (\ref{eq:samp}) directly apply to stacked DAs with no changes.
For stacked DAs the total number of $\mathcal{SFO}$ calls would simply be the sum of the calls across all the layers. 
The results also provide insights regarding convolutional neural networks where one-to-two layer neural nets are learned from small regions (e.g., local neighborhoods in imaging data), whose outputs are then combined using some nonlinear pooling operation \cite{lee2009convolutional}. 
Observe that the sub-linear dependence of convergence rate on the network size ($d_h.d_v$) from (\ref{eq:conv}) implies that whenever $S$ (and hence $N$) is reasonable large, small networks are learned efficiently. 
This partially supports the evidence that deep convolutional networks with multiple levels of pooling over large number of small networks are successful in learning complex concepts \cite{lee2009convolutional, krizhevsky2012imagenet}.  
With these results in hand, we now consider the case of distributed synchronous pre-training where small parts of the whole network are learned at-a-time. 

\section{Distributed DA pre-training}
\label{sec:dist}

The results in the previous section show that the convergence rate has polynomial dependence on the size of the network ($d_hd_v$), where the number of $\mathcal{SFO}$ calls increases as $(d_hd_v)^{3/2}$.
Although this is unlikely to happen in practice because of the redundancies across the input data dimensions (for example, sufficiently strong correlations across multiple input dimensions, presence of invariant dimensions etc.), the results in Corollaries \ref{thm:conv} and \ref{thm:samp} show that pre-training {\em very large} DAs is impractical with smaller sample sizes (and thereby fewer iterations). %(since $S$ is proportional to $N$). 
There is empirical evidence supporting that this is indeed the case in practice \cite{erhan2009difficulty, raina2009large}.
Several authors have suggested learning parts of the network instead.
Recently, \cite{dean2012large} showed empirical results on how distributed learning substantially improves convergence while not sacrificing test-time performance. 
Motivated by these ideas, we extend the results presented in Section \ref{sec:uppersamp} to the distributed pre-training setting.
We first show that the objective in (\ref{eq:damin}) lends itself to be distributed in a simple way where the whole network is broken down into multiple parts, and each such {\it sub-network} is learned in a synchronous manner.
By relating the corruption probabilities of these sub-networks to that of the parent DA, we compute a lower bound on the number of sub-networks required. 
Later, we present the convergence and sample size results for this distributed DA pre-training setting. 

Recall that the objective of DA in (\ref{eq:damin}) involves an expectation over corruptions $\tilde{\bf x}$ where certain visible units are nullified (set to $0$).
This implies that the corrupted dimension does not provide any information to the hidden layer.
Since the DA network is bipartite, the objective can then be separated into sub-networks (referred to as sub--DAs) -- while the hidden layer remains unchanged, we use only a {\em subset} of all available $d_v$ visible units. 
For each such sub--DA of size ($\ceil{\tau d_v},d_h$), where $0 < \tau < 1$ is the fraction of the visible layer used, the inputs from all the left out ($d_v-\ceil{\tau d_v}$) visible units is zero i.e., their corruption probability is $1$.
Now, consider the setting where $B$ such sub--DAs constructed by sampling $\ceil{\tau d_v}$ number of visible units with replacement. 
The following result shows the equivalence of learning these $B$ sub--DAs to learning one large DA of size ($d_v,d_h$). 

\begin{lemma}[\bf Distributed learning of one--layer DA]\label{thm:dda}
  Consider a DA network of size ($d_v,d_h$) with corruption probability $\zeta$ and some $1-\zeta < \tau < 1$ and 
  $0 < \phi \ll 1$. 
  Learning this DA is equivalent to learning $B > \frac{\log(\phi)}{\log(1-\tau)}$ number of DAs of size ($\ceil{\tau d_v},d_h$) with corruption probability 
  $1 - \frac{1-\zeta}{\tau}$, whose visible units are a fraction $\tau$ (with replacement) of the total available $d_v$ units, where $\ceil{\cdot}$ denotes the ceiling operation. 
\end{lemma}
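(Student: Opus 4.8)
The plan is to establish the claimed equivalence at the level of the marginal corruption distribution seen by each visible unit, and then to fix the number of sub-networks $B$ through a coverage argument. First I would fix an arbitrary visible dimension $j \in \{1, \dots, d_v\}$ and compute the probability that its corrupted input $\tilde{\mathbf{x}}_j$ retains the true value $\mathbf{x}_j$ --- i.e. that it ``survives'' and feeds genuine information to the shared hidden layer --- under each scheme. In the monolithic DA of (\ref{eq:damin}), $\tilde{\mathbf{x}}_j = \mathbf{x}_j$ with probability $1-\zeta$. In the distributed scheme, the event that dimension $j$ survives in a given sub-DA factors as (being among the $\lceil \tau d_v \rceil$ sampled units) and (not being nulled by that sub-DA's own corruption). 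The first event has probability $\tau$ (the fraction of the visible layer retained), and by construction the second has probability $\frac{1-\zeta}{\tau}$, so by independence the survival probability is $\tau \cdot \frac{1-\zeta}{\tau} = 1-\zeta$, exactly matching the parent DA. This computation also explains the stated choice of sub-DA corruption level: the hypothesis $1-\zeta < \tau < 1$ is precisely what makes $1 - \frac{1-\zeta}{\tau}$ a valid, strictly positive corruption probability.

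Next I would lift this per-unit matching to the objective itself. Because corruption is applied independently across the $d_v$ dimensions in both schemes, matching each marginal suffices to match the joint law of $\tilde{\mathbf{x}}$, and since the reconstruction loss in (\ref{eq:damin}) is an expectation over $p(\mathbf{x},\tilde{\mathbf{x}})$, the expected objective the shared $\mathbf{W}$ is trained against is identical --- provided every dimension is actually represented among the sub-DAs. The only gap in this argument is that a dimension absent from all $B$ sub-DAs contributes nothing, so I must guarantee full coverage of the $d_v$ units.

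Finally I would convert this coverage requirement into the bound on $B$. Since the $B$ sub-DAs are drawn independently (this is the role of sampling ``with replacement'' across sub-networks, so a unit may recur in several), the probability that a fixed dimension $j$ is omitted from every one of them is $(1-\tau)^B$. Demanding that this failure probability not exceed the prescribed tolerance $\phi$ gives $(1-\tau)^B \leq \phi$; taking logarithms and noting that $\log(1-\tau) < 0$ flips the inequality, I obtain $B \geq \frac{\log \phi}{\log(1-\tau)}$, which is the stated bound $B > \frac{\log(\phi)}{\log(1-\tau)}$.

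I expect the main obstacle to be the middle step: pinning down the precise sense in which the two learning problems are ``equivalent,'' and justifying that matching each dimension's \emph{marginal} corruption probability (rather than directly reasoning about the full joint distribution of $\tilde{\mathbf{x}}$ and the induced loss surface) is enough. The per-unit probability identity and the geometric coverage bound are routine; the care lies in arguing that, conditioned on coverage, the aggregated distributed objective and the original objective induce the same expected gradient landscape for $\mathbf{W}$, and in being explicit that the fixed-size subset sampling only perturbs the exact independence assumed above by a negligible amount.
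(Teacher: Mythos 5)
Your corruption-probability calculation (survival probability $\tau\cdot\frac{1-\zeta}{\tau}=1-\zeta$, giving $q=1-\frac{1-\zeta}{\tau}$) and your coverage argument ($(1-\tau)^B\leq\phi$ yielding $B>\frac{\log\phi}{\log(1-\tau)}$) are exactly the two computations the paper performs, and they are correct. However, the middle step --- the one you yourself flag as the main obstacle --- is where the actual content of the lemma lives, and your proposed argument for it fails on two counts. First, the claim that matching per-unit marginals implies matching the joint law of $\tilde{\bf x}$ is false for the distributed scheme: corruption there is \emph{not} independent across dimensions. Conditioned on a sub-DA with sampled set $S_b$, every unit outside $S_b$ is corrupted with probability $1$, so the effective corruption law is a mixture of block-structured patterns, not the product Bernoulli$(\zeta)$ law of the parent DA; two units outside the same $S_b$ have strongly positively correlated corruption events. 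Since the loss couples all dimensions nonlinearly through the shared hidden activation $\sigma({\bf W}\tilde{\bf x})$, matching marginals does not match expected objectives.

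Second, and more fundamentally, even if the joint corruption laws did agree, the objectives would still differ: the parent DA's loss $\|{\bf x}-\sigma({\bf W}^T\sigma({\bf W}\tilde{\bf x}))\|^2$ includes reconstruction terms for \emph{corrupted} units (this is the denoising criterion), whereas a sub-DA of size $(\ceil{\tau d_v},d_h)$ simply omits the reconstruction of all non-sampled units, i.e., the terms involving the decoder columns ${\bf W}_2$ of those units. The paper's proof exists precisely to handle these cross-terms: it fixes a corruption pattern, splits the weights into ${\bf W}_1$ (un-clamped units) and ${\bf W}_2$ (clamped units), and shows the loss decomposes as $\|{\bf x}_1-\sigma({\bf W}_1^T\sigma({\bf W}_1\bar{\bf x}_1))\|^2 + \|{\bf x}_2-\sigma({\bf W}_2^T\sigma({\bf W}_1\bar{\bf x}_1+{\bf W}_2\bar{\bf x}_2))\|^2$, where the first summand is a sub-DA objective and the second is itself the objective of another sub-DA (on the complementary units, under complete corruption) with ${\bf W}_1\bar{\bf x}_1$ entering as a fixed bias once ${\bf W}_1$ has been learned. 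This bipartite clamping decomposition, together with the sequential/synchronous learning it licenses, is the missing idea; without it, your distribution-matching argument cannot establish that the aggregated sub-DA problems and the parent problem induce the same training objective for ${\bf W}$.
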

\begin{proof} 
Recall that the DA objective is
\begin{equation}\label{eq:damin}
\min_{{\bf W}} \hspace{3mm} \mathbb{E}_{p({\bf x},{\bf \tilde{x}})} \| {\bf x} - \sigma({\bf W}^T \sigma({\bf W\tilde{x}})) \|^{2}
\end{equation} 
By considering one term from this expectation, we show that it is equivalent to learning two disjoint DAs of sizes ($\ceil{\tau d_v},d_h$) and ($d_v-\ceil{\tau d_v},d_h$) synchronously.
Without loss of generality, let this term correspond to the last $d_v-\ceil{\tau d_v}$ visible units be corrupted with probability $1$ i.e., are set to $0$. 
%First we prove that learning small sub--DAs of sizes ($\ceil{\tau d_v},d_h$) (for some $0 < \tau < 1$) is the same as minimizing the whole network (of size $(d_v,d_h)$). ???
For the rest of the proof, any visible unit that is set to $0$ via corruption will be referred to as a `clamped' unit.

Let $W_1$ (of size $d_h \times \ceil{\tau d_v}$) and $W_2$ (of size $d_h \times d_v-\ceil{\tau d_v}$) be the matrices of edge weights (i.e., unknown parameters) from the un-clamped and clamped visible units to all $d_h$ hidden units respectively.
For some inputs ${\bf x}$, let ${\bf x}_1$ (of length $\ceil{\tau d_v} \times 1$) and ${\bf x}_2$ ($d_v-\ceil{\tau d_v} \times 1$) be the un-clamped and clamped parts.
Hence $\bar{\bf x}_1 = {\bf x}_1$ and $\bar{\bf x}_2 = 0$.
Then the hidden activation ${\bf h}$, and the corresponding un-clamped and clamped reconstructions, $\hat{\bf x}_1$ and $\hat{\bf x}_2$ have the following structure,
\begin{equation}\label{eq:clamping} \begin{aligned}
{\bf h} = \sigma (W_1 \bar{\bf x}_1 + W_2 0) = \sigma (W_1 \bar{\bf x}_1) &\hspace{5mm}
\hat{\bf x}_1 = \sigma (W_1^T \sigma (W_1 \bar{\bf x}_1)) \\ 
\hat{\bf x}_2 = \sigma (W_2^T \sigma (W_1 \bar{\bf x}_1)) &= \sigma (W_2^T \sigma (W_1 \bar{\bf x}_1 + W_2 \bar{\bf x}_2))
\end{aligned} \end{equation}
The objective for the term considered then simplifies to 
\begin{equation}\label{eq:clampingobj}
\| {\bf x} - \hat{\bf x} \|^2 = \| {\bf x}_1 - \sigma (W_1^T \sigma (W_1 \bar{\bf x}_1)) \|^2 + 
\| {\bf x}_2 - \sigma (W_2^T \sigma (W_1 \bar{\bf x}_1 + W_2 \bar{\bf x}_2)) \|^2
\end{equation}
It is easy to see that the first term from the above summation is exactly minimizing the recovery of ${\bf x}_1$ with no corruption applied to it. 
That is to say, it corresponds to one of the terms in the objective of a smaller DA of size $\ceil{\tau d_v},d_h$. 
The second term in the summation has similar structure however with an extra $W_1 \bar{\bf x}_1$ within the inner sigmoid. 
If $W_1$ is fixed, then this the second term is minimizing the recovery of ${\bf x}_2$ with `complete' corruption applied to all the $1-\ceil{\tau d_v}$ dimensions.
Hence we can first pre-train the ($\ceil{\tau d_v},d_h$) sized sub-DA, and use the learned $W_1$ as a constant bias, and then learn the ($d_v-\ceil{\tau d_v},d_h$) sized sub-DA. 
This strategy can be shown for all the terms in the objective in (\ref{eq:damin}). 
With this, we can begin with set of sub-DAs of size ($\ceil{\tau d_v},d_h$) each and pre-train then one at-a-time in a synchronous manner, thereby justifying the distributed setting for DA pre-training. 

%Now, we know that these small `sub--DAs' minimize sets of terms in \eqref{eq:damin}. 
Now consider such a setup where many such ($\ceil{\tau d_v},d_h$) sub--DAs are learned synchronously by randomly sampling 
different subsets $\ceil{\tau d_v}$ of the total available visible units. 
It is easy to see that, in expectation this sequential distributed learning is equivalent to minimizing all the terms inside the expectation in \eqref{eq:damin}.
Hence learning the big ($d_v,d_h$) DA is the same as sequentially learning small DAs of size ($\ceil{\tau d_v},d_h$) where the units $\ceil{\tau d_v}$ are chosen at random.
In practice, this is achieved only if each of the visible unit is included in at least one of the sub--DAs (i.e., all unknown parameters are updated at least once).
Let $B$ be the number of sub--DAs that are learned sequentially. 
If $0 < \phi \ll 1$ denotes the probability that a given unit is not in all the $B$ sub--DAs (ideally, $\phi$ should be small in practice). 
Then, it is easy to see that this probability is given by $(1-\tau)^B$ because the probability that a particular unit is sampled to be included in one sub--DA is $\tau$. 
Since $1-\tau < 1$, we then have
\begin{equation}\label{eq:Bbound}
(1-\tau)^B \ll \phi \implies B \gg \frac{\log(\phi)}{\log(1-\tau)}
\end{equation}

We now relate the corruption probabilities of the sub--DAs (denoted by $q$) to that of the mother DA ($\zeta$). %, this behavior can be ensured with very high probability.
Recall that clamping is the same as corrupting (i.e., setting the input from that unit to be $0$).
Given the sampling fraction $\tau$, the probability that a given visible unit ($1,\ldots,d_v$) belongs to one sub--DA is $\tau$.
Further, if $q$ is the corruption probability of this sub--DA, then the un-clamping probability of a given unit is $(1-\tau)+\tau q$.  
If the $B$ sub--DAs are constructed independently by sampling the visible units with replacement, then the overall corrupting (un-clamping) probability is 
$\frac{(B-\tau B)+\tau qB}{B} = 1-\tau +\tau q$.
We require this to be equal to $\zeta$, which then gives $q = 1 - \frac{1-\zeta}{\tau}$ (with $1-\zeta < \tau < 1$).
\end{proof}

The above statement (proof in supplement) establishes the equivalence of distributed DA (dDA) pre-training to the non-distributed case by explicitly considering the DA's property of using nullified/corrupted inputs (which then provide no new information to the objective). 
We remark that Lemma \ref{thm:dda} is specific for the case of DAs and (unlike many other results in this paper) may not be directly applicable for other types of auto-encoders that do not involve an explicit corruption function.
Also, $\tau$ and $\zeta$ should be chosen carefully so that $1-\zeta < \tau$ and $1 - \frac{1-\zeta}{\tau}$ does not end up too close to $1$.
Specifically, whenever $\zeta$ is very small, according to Lemma \ref{thm:dda}, there is very little room for distribution because $\tau$ will be close to $1$.
This is not surprising because, with small $\zeta$, the DA is allowed to discard visible units very rarely, pushing $\tau$ closer to $1$, where the distributed setup tends to behave like the non-distributed case. 
Although these requirements seem too restrictive, we show in Section \ref{sec:exp} that they can be fairly relaxed in practice.
Overall, Lemma \ref{thm:dda} provides some justification (from the perspective of the autoencoder design itself) for distributing the learning process.   
The lower bound on $B$ in Lemma \ref{thm:dda} ensures that all the unknown parameters are updated in at least one of the $B$ sub--DAs. 
Hence, in practice, $\phi$ can be chosen to be very small and the sub-DAs can be explicitly sampled to be ``non-overlapping'' (i.e., disjoint with respect to the parameters). 
Once the hyper-parameters $\tau$, $\zeta$ and $B$ are fixed, the recipe is simple. 
The dDA pre-training setup will involve running $B$ individual RSGs on randomly sampled disjoint sub-DAs. 
The $B$ sub-DAs share a common parameter set which holds the latest estimates of ${\bf W}$. 

Similar to the multi-fold RSG setup in Section \ref{sec:uppersamp}, we perform $M$ meta-iterations of the dDA pre-training, where each meta--iteration involves learning $B$ number of sub--DAs.
Because sub--DAs are constructed randomly, different meta-iterations end up with different set of sub--DAs, ensuring low variance in the estimate of ${\bf W}$ corresponding to the $(\epsilon,\delta)$-solution (\ref{eq:epsdel}).
It is clear that due to the reduction in the size of the network by a factor of $\tau$, the convergence rate and required sample sizes (see (\ref{eq:conv}) and (\ref{eq:samp})) will improve in this distributed case.
This observation is formalized in the two results below. 
Here, $\gamma^b_k$ denotes the step size in $k^{th}$ meta--iteration for $b^{th}$ RSG (corresponding to $b^{th}$ sub--DA) and $N$ is the number of $\mathcal{SFO}$ calls for each of the $B$ RSGs.  
The subscript $b$ in ${\bf W}^{R_b}_b$ represents the updates of $b^{th}$ RSG where $R_b$ is its stopping iteration.

\begin{corollary}[\bf Convergence of one--layer dDA]\label{thm:convdda}
  The optimal constant step size $\gamma_b^k$ is given by
{\small \begin{equation}\label{eq:optstepdda}
  \gamma_b^k = \frac{D}{\sqrt{N}(\tau d_hd_v)^{3/4}} \hspace{3mm} \forall b,k; \hspace{2mm} 0 < D \leq \frac{\sqrt{N}}{L'}(\tau d_hd_v)^{1/4}
\end{equation}}
By selecting $B$ according to Lemma \ref{thm:dda}, and denoting $\bar{D} = \frac{D_f}{BD} + DL^2L'$, we have,
\begin{equation}\label{eq:convdda} 
\mathbb{E} (\| \nabla_{\bf W} f({\bf W}^{R_b}_b)\|^2) \leq \bar{D} \frac{(\tau d_hd_v)^{3/4}}{\sqrt{N}} \end{equation}
\end{corollary}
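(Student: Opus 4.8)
The plan is to reduce the distributed claim to a direct application of Corollary~\ref{thm:conv} on each of the $B$ sub--DAs supplied by Lemma~\ref{thm:dda}, the only change being that the effective network size seen by a single RSG drops from $d_hd_v$ to $\tau d_hd_v$. First I would invoke Lemma~\ref{thm:dda} to assert that pre-training the full $(d_v,d_h)$ network is, in expectation, the same as synchronously running $B$ RSGs on sub--DAs of size $(\ceil{\tau d_v},d_h)$ with corruption probability $1-\frac{1-\zeta}{\tau}$. Each such sub--DA is itself a one--layer DA on a bipartite network with $\ceil{\tau d_v}\,d_h \approx \tau d_hd_v$ parameters, so every step of the single--network analysis in Section~\ref{sec:uppersamp} applies verbatim once $d_hd_v$ is replaced by $\tau d_hd_v$.

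Next I would re-derive the per--sub--DA variance and Lipschitz bounds. Lemma~\ref{thm:varlip} applied to a sub--DA with $\tau d_hd_v$ parameters gives $\Var(G)\le \tau d_hd_v L^2$ and $\|\nabla_{\bf W} f({\bf W})-\nabla_{\bf W} f(\hat{\bf W})\|\le \sqrt{\tau d_hd_v}\,L'\|{\bf W}-\hat{\bf W}\|$, since $L,L'$ are per--coordinate constants and the sums in \eqref{eq:varbound} and \eqref{eq:lipbound} now run over only $\tau d_hd_v$ entries. Feeding these into the argument of Lemma~\ref{thm:expgrad} and specialising to a constant stepsize $\gamma=\gamma_b^k$ exactly as in Corollary~\ref{thm:conv}, the reduced bound for one sub--DA becomes
\begin{equation*}
\mathbb{E}(\|\nabla_{\bf W} f({\bf W}^{R_b}_b)\|^2) \le \frac{D_f^{(b)}}{N\gamma} + (\tau d_hd_v)^{3/2}L^2L'\gamma,
\end{equation*}
where $D_f^{(b)}$ is twice the initial--to--optimal objective gap carried by the $b^{th}$ sub--DA. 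The monotonicity requirement $\gamma<\frac{1}{L'\sqrt{\tau d_hd_v}}$ is what produces the stated range $0<D\le \frac{\sqrt{N}}{L'}(\tau d_hd_v)^{1/4}$ once we write $\gamma=\frac{D}{\sqrt{N}(\tau d_hd_v)^{3/4}}$.

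The crucial accounting step is to relate $D_f^{(b)}$ to the global gap $D_f=2(f({\bf W}^1)-f^*)$. Here I would use the additive decomposition of the DA objective established in the proof of Lemma~\ref{thm:dda}: since the full reconstruction loss splits (in expectation) into $B$ sub--DA losses that share the common parameter set, the total deviation $D_f$ is distributed across the $B$ synchronous RSGs, so on average each sub--DA carries $D_f^{(b)}=D_f/B$. Substituting $D_f^{(b)}=D_f/B$ and $\gamma=\frac{D}{\sqrt{N}(\tau d_hd_v)^{3/4}}$ into the reduced bound, the first term becomes $\frac{D_f}{BD}\cdot\frac{(\tau d_hd_v)^{3/4}}{\sqrt{N}}$ and the second becomes $DL^2L'\cdot\frac{(\tau d_hd_v)^{3/4}}{\sqrt{N}}$; collecting the common factor and setting $\bar{D}=\frac{D_f}{BD}+DL^2L'$ yields \eqref{eq:convdda}. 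The optimal $D$ is the one balancing the two terms, mirroring \eqref{eq:balance}.

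The main obstacle I anticipate is justifying the clean splitting $D_f^{(b)}=D_f/B$ rather than merely $\sum_b D_f^{(b)}=D_f$. This requires that the decomposition of Lemma~\ref{thm:dda} distributes the objective gap roughly evenly across the randomly sampled, disjoint sub--DAs. This is reasonable when the sub--DAs are sampled with replacement so that every visible unit is covered (the $B>\log\phi/\log(1-\tau)$ condition) and the parameter responsibility is shared uniformly across meta--iterations, but it is only an \emph{in--expectation} statement, so I would present it as such. Everything else is a mechanical substitution into the single--network bound under $d_hd_v\mapsto\tau d_hd_v$ together with the $1/B$ reduction of the driving term $D_f$.
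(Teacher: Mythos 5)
Your reduction to the single-network analysis with $d_hd_v \mapsto \tau d_hd_v$, the constant-stepsize specialisation, and the balancing that produces the admissible range of $D$ all match the paper. The genuine gap is exactly the step you flag yourself: the claim $D_f^{(b)} = D_f/B$. The paper never splits the optimality gap across sub--DAs, and Lemma \ref{thm:dda} cannot be used to do so --- its decomposition is of the reconstruction \emph{loss} across clamped and un-clamped visible units, and says nothing about how $f({\bf W}_b^1) - f^*$ distributes over the $B$ runs. Worse, the quantity you need cannot behave as claimed: with $D_f^{(b)} = 2(f({\bf W}_b^1)-f^*)$ one has $D_f^{(1)} = D_f$ exactly, and every $D_f^{(b)} \geq 0$, so $\sum_b D_f^{(b)} \geq D_f$ always; and since no monotone decrease of the function values themselves is guaranteed (the paper explicitly notes after Lemma \ref{thm:expgrad} that the analysis offers no such guarantee), there is no control on $D_f^{(b)}$ for $b>1$, let alone an even $1/B$ split.

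The paper's mechanism for the $1/B$ factor is different and avoids this issue entirely: the $B$ RSGs are chained, so the estimate at the end of the $b^{th}$ RSG is the starting point of the $(b+1)^{th}$, i.e.\ ${\bf W}_{b+1}^1 = {\bf W}_b^{N_b+1}$. Summing the per-RSG descent inequalities (the analogue of \eqref{eq:sumineq0}) over $b$ telescopes the function-value differences to $f({\bf W}_1^1) - f({\bf W}_B^{N_B+1})$, and the bound $f^* \leq f({\bf W}_B^{N_B+1})$ is invoked only once, at the very end. The single gap $D_f$ then sits over the accumulated stepsize weight $\sum_{b=1}^{B}\sum_{k=1}^{N}\gamma_b^k = NB\gamma$, which is precisely where
\begin{equation*}
\frac{D_f}{NB\gamma} = \frac{D_f}{BD}\cdot\frac{(\tau d_hd_v)^{3/4}}{\sqrt{N}}
\end{equation*}
comes from. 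Correspondingly, the stopping distribution is defined jointly over all $NB$ iterations across the $B$ RSGs (see \eqref{eq:pmfdda1}), not per sub--DA; the left-hand side of \eqref{eq:convdda} is an expectation over that joint distribution, which is not the per-$b$ expectation your black-box application of Corollary \ref{thm:conv} would bound. Replacing your splitting step by this telescoping argument repairs the proof; the remaining substitutions in your write-up then coincide with the paper's.
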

\begin{proof} 
The proof for this theorem emulates the proofs of Lemma \ref{thm:expgrad} and Corollary \ref{thm:conv}. 
First we derive an upper bound on the expected gradients similar to the one in \eqref{eq:upperbound} of Lemma \ref{thm:expgrad}. 
Using this bound, we then compute the optimal stepsizes and the rate of convergence. 

In the distributed setting, we have $B$ number of RSGs running synchronously (or sequentially) and the size of each of the $B$ sub--DAs is ($\ceil{\tau d_v},d_h$).
This is the same as a ($d_v,d_h$) DA with $\tau d_hd_v$ unknowns (for notational convenience the ceil operator $\ceil{\cdot}$ is dropped in the analysis).
So, the bounds on the variance of noisy gradients ($G(\eta^k;{\bf W}^k)$) and the Lipschitz continuity of $f({\bf W})$ change as follows,
\begin{equation} \begin{aligned}\label{eq:varlipboundsdda}
\Var(G(\eta^k;{\bf W}^k)) &\leq \tau d_hd_vL^2 \\
\| \nabla_{{\bf W}} f({\bf W}) - \nabla_{{\bf W}} f(\hat{{\bf W}}) \| &\leq \sqrt{\tau d_hd_v}L' \|{\bf W} - \hat{{\bf W}}\| \\
\end{aligned} \end{equation}

We then have the following inequality for each of the $B$ RSGs based on the analysis in the proof of Lemma 3.2 until \eqref{eq:sumineq0}
\begin{equation} \begin{aligned}\label{eq:sumineq0dda}
\sum_{k=1}^{N_b} \left( \gamma_b^k - \frac{\sqrt{\tau d_hd_v}L'}{2}(\gamma_b^k)^2 \right) \|f({\bf W}_b^{k})\|^2 &\leq f({\bf W}_b^1) - f({\bf W}_b^{N_b+1}) \\
- \sum_{k=1}^{N_b} \biggl( \gamma_b^k - \sqrt{\tau d_hd_v}L'(\gamma_b^k)^2 \biggr) &\langle \nabla_{\bf W} f({\bf W}_b^{k}),\delta_b^k \rangle 
+ \frac{\sqrt{\tau d_hd_v}L'}{2} \sum_{k=1}^{N_b} (\gamma_b^k)^2 \|\delta_b^k\|^2 \\
\end{aligned} \end{equation}
It should be noted that the subscript $b$ indicates $b^{th}$ RSG i.e., ${\bf W}_b^k$ is the $k^{th}$ update from $b^{th}$ RSG.
$N_b$ denotes the maximum number of iterations of $b^{th}$ RSG.
Although the size of ${\bf W}$ is $d_h \times d_v$, only $\tau d_hd_v$ of the total $d_hd_v$ are being updated within a single RSG. 

Now recall that the sequential nature of the $B$ RSGs implies that the estimate of ${\bf W}$ at the end of $b^{th}$ RSG will be the starting point for the $(b+1)^{th}$ RSG. 
This implies that $f({\bf W}_b^{N_b+1}) = f({\bf W}_{b+1}^1)$ for all $b = 1,\ldots,B$.
Using this fact, we can then sum up all the $B$ inequalities of the form in \eqref{eq:sumineq0dda} to get,
\begin{equation}\label{eq:sumineq1dda} \begin{aligned}
\sum_{b=1}^{B} \sum_{k=1}^{N_b} \left( \gamma_b^k - \frac{\sqrt{\tau d_hd_v}L'}{2}(\gamma_b^k)^2 \right) \|f({\bf W}_b^{k})\|^2 &\leq f({\bf W}_1^1) - f({\bf W}_B^{N_B+1}) \\
- \sum_{b=1}^{B} \sum_{k=1}^{N_b} ( \gamma_b^k - \sqrt{\tau d_hd_v}L'(\gamma_b^k)^2 ) &\langle \nabla_{\bf W} f({\bf W}_b^{k}),\delta_b^k \rangle + \frac{\sqrt{\tau d_hd_v}L'}{2} \sum_{b=1}^{B} \sum_{k=1}^{N_b} (\gamma_b^k)^2 \|\delta_b^k\|^2 \\
\end{aligned} \end{equation}
Using the fact that $f^{*} \leq f({\bf W}_B^{N_B+1})$, we then have
\begin{equation}\label{eq:sumineq2dda} \begin{aligned}
\sum_{b=1}^{B} \sum_{k=1}^{N_b} \left( \gamma_b^k - \frac{\sqrt{\tau d_hd_v}L'}{2}(\gamma_b^k)^2 \right) \|f({\bf W}_b^{k})\|^2 &\leq f({\bf W}_1^1) - f^{*} \\
- \sum_{b=1}^{B} \sum_{k=1}^{N_b} ( \gamma_b^k - \sqrt{\tau d_hd_v}L'(\gamma_b^k)^2 ) &\langle \nabla_{\bf W} f({\bf W}_b^{k}),\delta_b^k \rangle + \frac{\sqrt{\tau d_hd_v}L'}{2} \sum_{b=1}^{B} \sum_{k=1}^{N_b} (\gamma_b^k)^2 \|\delta_b^k\|^2 \\
\end{aligned} \end{equation}

We now take the expectation of the above inequality over all the random variables involved in the $B$ RSGs, which include, the $B$ number of stopping criterions $R_b, b = 1,\ldots,B$ and the random processes $\eta_b^{[N_b]}, b = 1,\ldots,B$ ($\eta_b^{[N_b]}$ is the random process of $\eta$ within $b^{th}$ RSG).
First, note the following observations about $\langle \nabla_{\bf W} f({\bf W}_b^{k}),\delta_b^k \rangle$ and $\|\delta_b^k\|^2$
\begin{equation}\label{eq:ddaobs}\begin{split}
\mathbb{E}_{\eta_b^{[N_b]}} (G(\eta_b^k;{\bf W}_b^k)) = \nabla_{\bf W} f({\bf W}_b^{k})) & \implies \langle \nabla_{\bf W} f({\bf W}_b^{k}),\delta_b^k \rangle = 0 \\
\mathbb{E}_{\eta_b^{[N_b]}} \| \delta_b^k \|^2 = Var(G(\eta_b^k;{\bf W}_b^k)) & \leq \tau d_hd_vL^2
\end{split} \end{equation}
which follow from \eqref{eq:varlipboundsdda}. This implies that after taking the expectation of the inequality in \eqref{eq:sumineq2dda}, the last two terms on the right hand side will be,
\begin{equation}\label{eq:term1dda} \begin{aligned}
\mathbb{E}_{\eta^{[N]}} & \left[ \sum_{b=1}^{B} \sum_{k=1}^{N_b} \biggl( \gamma_b^k - \sqrt{\tau d_hd_v}L'(\gamma_b^k)^2 \biggr) \langle \nabla_{\bf W} f({\bf W}_b^{k}),\delta_b^k \rangle \right] \\
& \hspace{5mm} = \sum_{b=1}^{B} \sum_{k=1}^{N_b} \biggl( \gamma_b^k - \sqrt{\tau d_hd_v}L'(\gamma_b^k)^2 \biggr) \hspace{2mm} \mathbb{E}_{\eta^{[N_b]}} (\langle \nabla_{\bf W} f({\bf W}_b^{k}),\delta_b^k \rangle | \eta_b^{1},\ldots,\eta_b^{k}) = 0 \\
\end{aligned} \end{equation}
\begin{equation}\label{eq:term2dda} \begin{aligned}
\mathbb{E}_{\eta^{[N]}} \left[ \frac{\sqrt{\tau d_hd_v}L'}{2} \sum_{b=1}^{B} \sum_{k=1}^{N_b} (\gamma_b^k)^2 \|\delta_b^k\|^2 \right] &= \frac{\sqrt{\tau d_hd_v}L'}{2} \sum_{b=1}^{B} \sum_{k=1}^{N_b} (\gamma_b^k)^2 \mathbb{E}_{\eta^{[N_b]}} (\|\delta_b^k\|^2) \\
&\leq \frac{(\sqrt{\tau d_hd_v})^3L'L^2}{2} \sum_{k=1}^{N_b} (\gamma_b^k)^2 
\end{aligned} \end{equation}
where $\eta^{[N]}$ denotes the composition of the $B$ random processes $\eta^{[N_b]}, b = 1,\ldots,B$. 
Using \eqref{eq:term1dda} and \eqref{eq:term2dda} and \eqref{eq:sumineq2dda}, we get
\begin{equation}\label{eq:sumineq3dda} \begin{aligned}
\sum_{b=1}^{B} \sum_{k=1}^{N_b} &\left( 2\gamma_b^k - \sqrt{\tau d_hd_v}L'(\gamma_b^k)^2 \right) \mathbb{E}_{\eta^{[N]}} \|f({\bf W}_b^{k})\|^2 \\
&\leq 2(f({\bf W}_1^1) - f^{*}) + (\sqrt{\tau d_hd_v})^3L'L^2 \sum_{b=1}^{B} \sum_{k=1}^{N_b} (\gamma_b^k)^2 
\end{aligned} \end{equation}
Recall the definition of $\mathbb{P}_R(k)$ from \eqref{eq:pmf} in Lemma \ref{thm:expgrad}, which is
\begin{equation} \label{eq:pmfdda}
\mathbb{P}_R(k) = Pr(R = k) = \frac{2\gamma^k - L'\sqrt{d_hd_v}(\gamma^k)^2}{\sum_{k=1}^{N} 2\gamma^k - L'\sqrt{d_hd_v}(\gamma^k)^2} \hspace{3mm} k=1,\ldots,N
\end{equation}
Adapting this to the current case of $B$ sequential RSGs, we get
\begin{equation} \label{eq:pmfdda1}
\mathbb{P}_{R_b}(k) = Pr(R_b = k) := \frac{2\gamma_b^k - L'\sqrt{\tau d_hd_v}(\gamma_b^k)^2}{\sum_{b=1}^{B} \sum_{k=1}^{N_b} 2\gamma_b^k - L'\sqrt{\tau d_hd_v}(\gamma_b^k)^2} \hspace{1mm} k=1,\ldots,N \hspace{1mm} b = 1,\ldots,B
\end{equation}
Using this distribution of stopping criterion and taking the expectation of \eqref{eq:sumineq3dda} with respect the set of random variables to $R_b, b = 1,\ldots,B$, we get
\begin{equation}\label{eq:finalineqdda} \begin{aligned}
    \mathbb{E} (\| \nabla_{\bf W} f({\bf W}_b^R)\|^2) &= \sum_{b=1}^{B} \sum_{k=1}^{N_b} \frac{(2\gamma_b^k - L'\sqrt{\tau d_hd_v}(\gamma_b^k)^2) \mathbb{E}_{\eta^{[N]}} (\| \nabla_{\bf W} f({\bf W}_b^R)\|^2)}{\sum_{b=1}^{B} \sum_{k=1}^{N} (2\gamma_b^k - L'\sqrt{\tau d_hd_v}(\gamma_b^k)^2)} \\
& \leq \frac{D_f + (\sqrt{\tau d_hd_v})^{3}L^2L' \sum_{b=1}^B \sum_{k=1}^N(\gamma_b^k)^2 }{\sum_{b=1}^B \sum_{k=1}^N (2\gamma_b^k - L'\sqrt{\tau d_hd_v}(\gamma_b^k)^2)} \\
\end{aligned} \end{equation}
Observe that whenever $B$ is selected as in Lemma \ref{thm:dda} with sufficiently high $\phi$, each of the $d_hd_v$ unknowns is updated in at least one of the $B$ RSGs.
Hence, all the unknowns are covered in the left hand side above, see \ref{eq:finalineqdda}.

We now compute the optimal stepsizes and the corresponding convergence rate using the upper bound in \eqref{eq:finalineqdda}.
At any given point of time only one of the $B$ RSGs will be running. 
So, using Lemma \ref{thm:conv}, the optimal constant stepsize for $b^{th}$ RSG is then given by  
\begin{equation*}%\label{eq:gammakdda0}
  \gamma_b^k = \gamma_b = \frac{D}{\sqrt{N_b}(\tau d_hd_v)^{3/4}} \hspace{2mm} \text{where} \hspace{2mm} D \leq \frac{\sqrt{N_b}}{L'}(\tau d_hd_v)^{1/4}
\end{equation*}
Assuming $N_b = N$ for all $b = 1,\ldots,B$, we then have
\begin{equation}\label{eq:gammakdda}
\gamma_b^k = \gamma = \frac{D}{\sqrt{N}(\tau d_hd_v)^{3/4}} \hspace{1mm} \forall k,b \hspace{2mm} \text{where} \hspace{2mm} D \leq \frac{\sqrt{N}}{L'}(\tau d_hd_v)^{1/4}
\end{equation}

With this in hand, we now derive the convergence rate. 
Using some constant stepsizes $\gamma_b^k = \gamma$ for all $k,b$ and assumption that $N_b = N$ for all $b$, the upper bound in \eqref{eq:finalineqdda} becomes
\begin{equation}\label{eq:finalineq2dda} \begin{split}
\mathbb{E} (\| \nabla_{\bf W} f({\bf W}_b^{R_b})\|^2) & \leq 
\frac{D_f + (\sqrt{\tau d_hd_v})^{3}L^2L' NB\gamma^2 }{NB(2\gamma - L'\sqrt{\tau d_hd_v}\gamma^2)} \\
& \leq \frac{D_f + (\sqrt{\tau d_hd_v})^{3}L^2L' NB\gamma^2 }{NB\gamma} \\
\end{split} \end{equation}
where the last inequality uses the fact that $(2 - L'\sqrt{\tau d_hd_v}\gamma) > 1$ (which follows from Lemma \ref{thm:convexpgrad} and was used in deriving the stepsizes in Lemma \ref{thm:conv}).
Substituting for $\gamma$ from \eqref{eq:gammakdda} in the above inequality gives, 
\begin{equation}\label{eq:reduced2dda} \begin{aligned}
\mathbb{E} (\| \nabla_{\bf W} f({\bf W}_b^{R_b})\|^2) & \leq \frac{D_f}{NB\gamma} + (\tau d_hd_v)^{3/2}L^2L'\gamma \\
& = \frac{D_f(\tau d_hd_v)^{3/4}}{\sqrt{N}DB} + \frac{DL^2L'(\tau d_hd_v)^{3/4}}{\sqrt{N}} \\
\end{aligned} \end{equation}
By denoting $\bar{D} = \frac{D_f}{BD} + DL^2L'$, we finally have
\begin{equation}\label{eq:finalconvdda}
\mathbb{E} (\| \nabla_{\bf W} f({\bf W}_b^{R_b})\|^2) \leq \bar{D} \frac{(\tau d_hd_v)^{3/4}}{\sqrt{N}}
\end{equation}
\end{proof}

\begin{corollary}[\bf Sample size estimates of one--layer dDA]\label{thm:sampdda}
  The number of meta--iterations ($M$) and the number of data instances ($S$) required to compute a $(\epsilon,\delta)$-solution in the distributed setting are %given by
  \begin{equation}\label{eq:sampdda}
    M(r,\delta) \geq \ceil[\bigg]{\frac{\log(\frac{1}{\delta})}{\log(\sqrt{r})}} \hspace{4mm} ; \hspace{4mm}
    S(r,\epsilon) \geq \frac{r(\tau d_hd_v)^{3/2}}{t\epsilon^2} 
  \end{equation} 
  where $r > 1$ is a given constant and $t$ denotes the average number of times each data instance is used within each sub--DA. \end{corollary}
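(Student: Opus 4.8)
The plan is to mirror the proof of Corollary \ref{thm:samp}, substituting the distributed convergence rate \eqref{eq:convdda} of Corollary \ref{thm:convdda} for the rate \eqref{eq:conv}, and replacing the $C$ independent folds by the $M$ meta--iterations. The structural fact that makes this work is that, by the choice of $B$ in Lemma \ref{thm:dda}, each meta--iteration touches all $d_hd_v$ parameters (through its $B$ sub--DAs) and therefore yields an independent estimate ${\bf W}^{R_m}$ of the \emph{full} network whose gradient norm is controlled by \eqref{eq:convdda}; the ``effective'' size entering that bound is $\tau d_hd_v$, the number of parameters actually updated inside a single sub--DA.

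First I would restate the $(\epsilon,\delta)$--solution in the distributed setting, now over $M$ meta--iterations,
\begin{equation*}
Pr\left( \min_{1,\ldots,M} \| \nabla_{\bf W} f({\bf W}^{R_m}) \|^2 \geq \epsilon \bar{D} \right) \leq \delta .
\end{equation*}
Because distinct meta--iterations sample their sub--DAs independently, this probability factorizes into $\prod_{m=1}^{M} Pr(\| \nabla_{\bf W} f({\bf W}^{R_m}) \|^2 \geq \epsilon \bar{D})$, exactly as in \eqref{eq:epsdelcomp}. Markov's inequality together with \eqref{eq:convdda} then gives, for each $m$,
\begin{equation*}
Pr\left( \| \nabla_{\bf W} f({\bf W}^{R_m}) \|^2 \geq \epsilon \bar{D} \right) \leq \frac{\mathbb{E}(\| \nabla_{\bf W} f({\bf W}^{R_m}) \|^2)}{\epsilon \bar{D}} \leq \frac{(\tau d_hd_v)^{3/4}}{\epsilon \sqrt{N}} ,
\end{equation*}
where the common $\bar{D}$ cancels. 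Demanding that this per--meta--iteration bound be at most $1/\sqrt{r}$ for the chosen $r>1$ forces $N \geq \frac{r(\tau d_hd_v)^{3/2}}{\epsilon^2}$; substituting back gives $r^{-M/2}$ for the product, and enforcing $r^{-M/2}\leq\delta$ yields $M(r,\delta) \geq \ceil[\bigg]{\frac{\log(1/\delta)}{\log(\sqrt{r})}}$.

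For the sample size I would repeat the counting argument of Corollary \ref{thm:samp}, being careful to note that there is no additional randomization of the data across the $M$ meta--iterations nor across the $B$ sub--DAs of a given meta--iteration: every sub--DA reuses the full pool of $S$ examples (only the subset of active visible units varies). It therefore suffices to examine one sub--DA. With $t$ the average number of times an instance is used within a sub--DA, the identity $N = \sum_{s=1}^{S} t_s \approx S\,\mathbb{E}(t_s) = St$ gives $S = N/t \geq \frac{r(\tau d_hd_v)^{3/2}}{t\epsilon^2}$.

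The main obstacle is not any individual inequality --- each step is a transcription of Corollary \ref{thm:samp} --- but the bookkeeping of $N$, $t$, and $S$ once the network is split. I would take care to justify simultaneously that (i) the size driving the convergence rate is $\tau d_hd_v$, the count of parameters updated per sub--DA, and (ii) the min over meta--iterations nonetheless bounds the gradient of the entire ${\bf W}$, which requires the lower bound on $B$ from Lemma \ref{thm:dda}. Reconciling these two viewpoints is precisely what lets the $\tau^{3/2}$ saving in $S$ emerge without the sample requirement being inadvertently scaled up by $B$ or $M$.
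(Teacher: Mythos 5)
Your proposal is correct and follows essentially the same route as the paper's own proof: Markov's inequality applied to the distributed rate \eqref{eq:convdda} (with $\bar{D}$ cancelling), factorization of the failure probability over the $M$ independent meta--iterations to force $N \geq \frac{r(\tau d_hd_v)^{3/2}}{\epsilon^2}$ and hence the bound on $M$, and the same $N = St$ counting argument restricted to a single sub--DA since the data pool is shared. The only cosmetic difference is that the paper additionally remarks $\bar{D} \leq \frac{D_f}{D} + DL^2L'$ (since $B \geq 1$), which plays no essential role because, as you note, $\bar{D}$ cancels in the Markov step.
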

  \begin{proof} 
First observe that there is no randomization of data instances across the $B$ sub---DAs. 
Hence we can compute the sample sizes $S$ from a single sub--DA.
Secondly, since $B \geq 1$, $\bar{D}$ in Corollary \ref{thm:convdda} is such that $\bar{D} \leq \frac{D_f}{D} + DL^2L'$. 
Using these two facts, the computation for $S$ then follows the steps in Lemma \ref{thm:samp} with $d_hd_v$ replaced by $\tau d_hd_v$. 
Hence we have, 
\begin{equation}\label{eq:sampddacomp}
S(r,\epsilon) \geq \frac{r(\tau d_hd_v)^{3/2}}{t\epsilon^2} 
\end{equation}
To compute the bound for $M$ we follow the same steps in the proof of Lemma \ref{thm:samp}, and end up with the following inequality
\begin{equation}\label{eq:epsdelcompdda} 
Pr\left( \| \nabla_{\bf W} f({\bf W}^{R_c}) \|^2 \geq \epsilon  \bar{D} \right) \leq \frac{(\tau d_hd_v)^{3/4}}{\epsilon\sqrt{N}} %\\
\end{equation}
Since $N$ is the number of calls for each of the $B$ RSGs, we have $N = \frac{r(\tau d_hd_v)^{3/2}}{\epsilon^2}$ using \eqref{eq:sampddacomp}.
Then we have,
\begin{equation}\label{eq:epsdelcompdda2}
Pr\left( \min_{{1,\ldots,M}} \| \nabla_{\bf W} f({\bf W}^{R_c}) \|^2 \geq \epsilon  \bar{D} \right) \leq \prod_{c=1}^{M} \frac{1}{\sqrt{r}} = \frac{1}{r^{C/2}}
\end{equation}
and hence $M(r,\delta) \geq \frac{\log(\frac{1}{\delta})}{\log(\sqrt{r})} = \ceil[\bigg]{\frac{\log(\frac{1}{\delta})}{\log(\sqrt{r})}}$.
\end{proof}

These results show that, whenever $B$ is chosen as in Lemma \ref{thm:dda}, the convergence rate and sample sizes will improve by $\tau^{3/4}$ and $\tau^{3/2}$ respectively, if the stepsize is appropriate.
The improvements may be much larger whenever $\zeta$ is not unreasonably small (or $\tau$ is not too close to $1$). 

\section{Experiments}
\label{sec:exp}

To evaluate the bounds presented above, we pre-trained a one--layer DA on two computer vision and one neuroimaging datasets -- MNIST digits, Magnetic Resonance Images from Alzheimer's Disease Neuroimaging Initiative (ADNI) and ImageNet. 
These will be referred to as {\it mnist, neuro} and {\it imagenet}.
See supplement for complete details about these datasets, including the number of instances, features and other attributes.
Briefly, {\it neuro} dataset has stronger correlations across its dimensions compared to others, and {\it imagenet} includes natural images and is very diverse/versatile.

Our experiments are two-fold. We first evaluate the non-distributed setting (Corollary \ref{thm:conv}, (\ref{eq:conv})) by computing the expected gradients vs. the number of $\mathcal{SFO}$ calls ($N$) and the network structure ($d_v,d_h$). 
We then evaluate the distributed setup (Corollary \ref{thm:convdda}, (\ref{eq:convdda})) by varying the number of disjoint sub-DAs ($B$) that constitute the network. 
The expectations in \eqref{eq:conv} and \eqref{eq:convdda} are approximated by the empirical average of gradient norm (last $100$ iterations). 
Since we are interested in the trends of convergence rates, all plots are normalized/scaled by the corresponding maximum value of expected gradients. 
Figure \ref{fig:exp} shows these results: the first and second columns correspond to the non-distributed setting and the last column corresponds to the distributed setting. 
Each row represents one of the three datasets considered. 
\begin{figure*}[!ht]\centering
\subfloat[{\footnotesize {\it mnist}; \hspace{0.25mm} Expected gradients vs. $N$}]{\includegraphics[width=58mm]{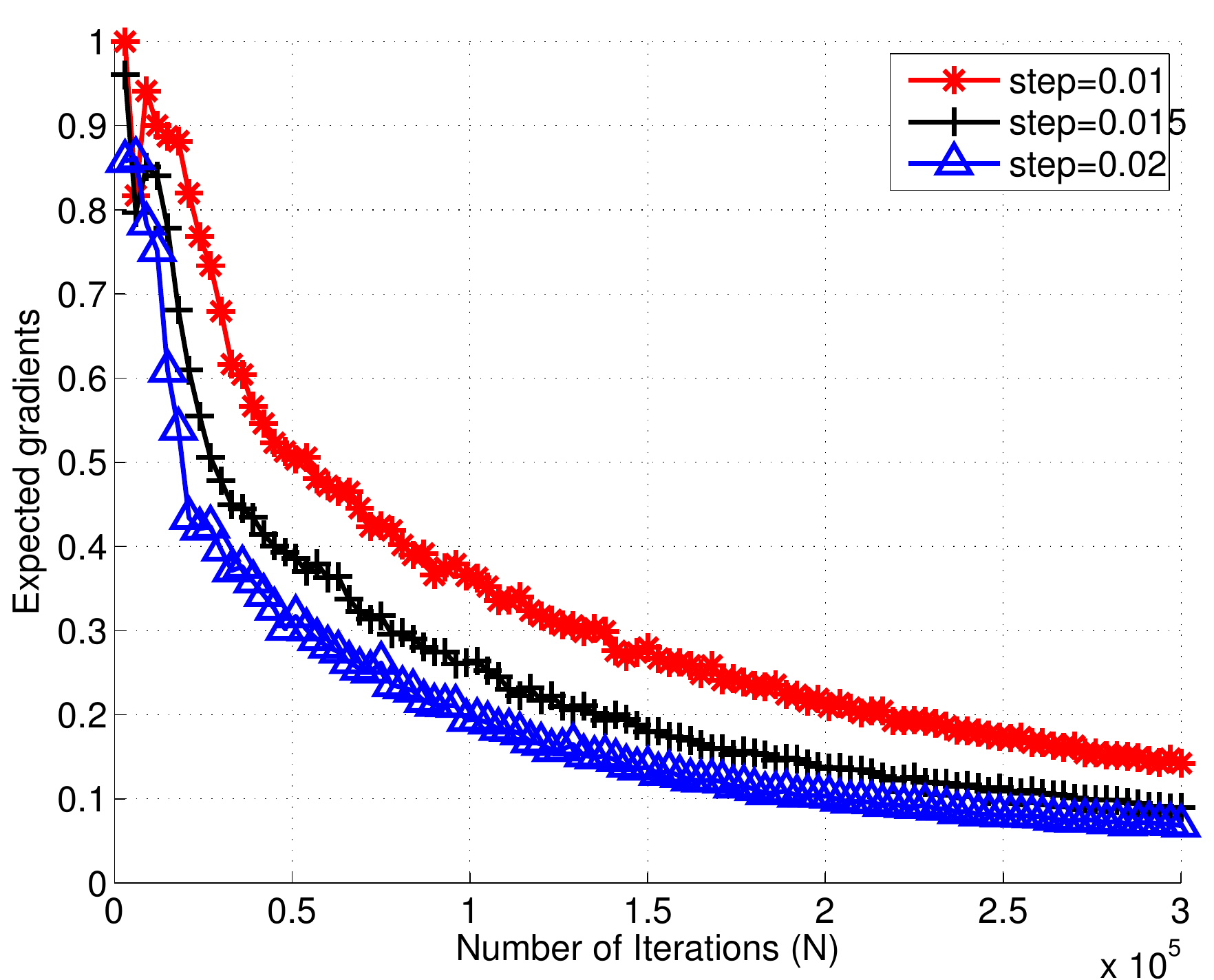}} 
\subfloat[{\footnotesize {\it mnist}; \hspace{0.25mm} Expected gradients vs. $d_v,d_h$}]{\includegraphics[width=58mm]{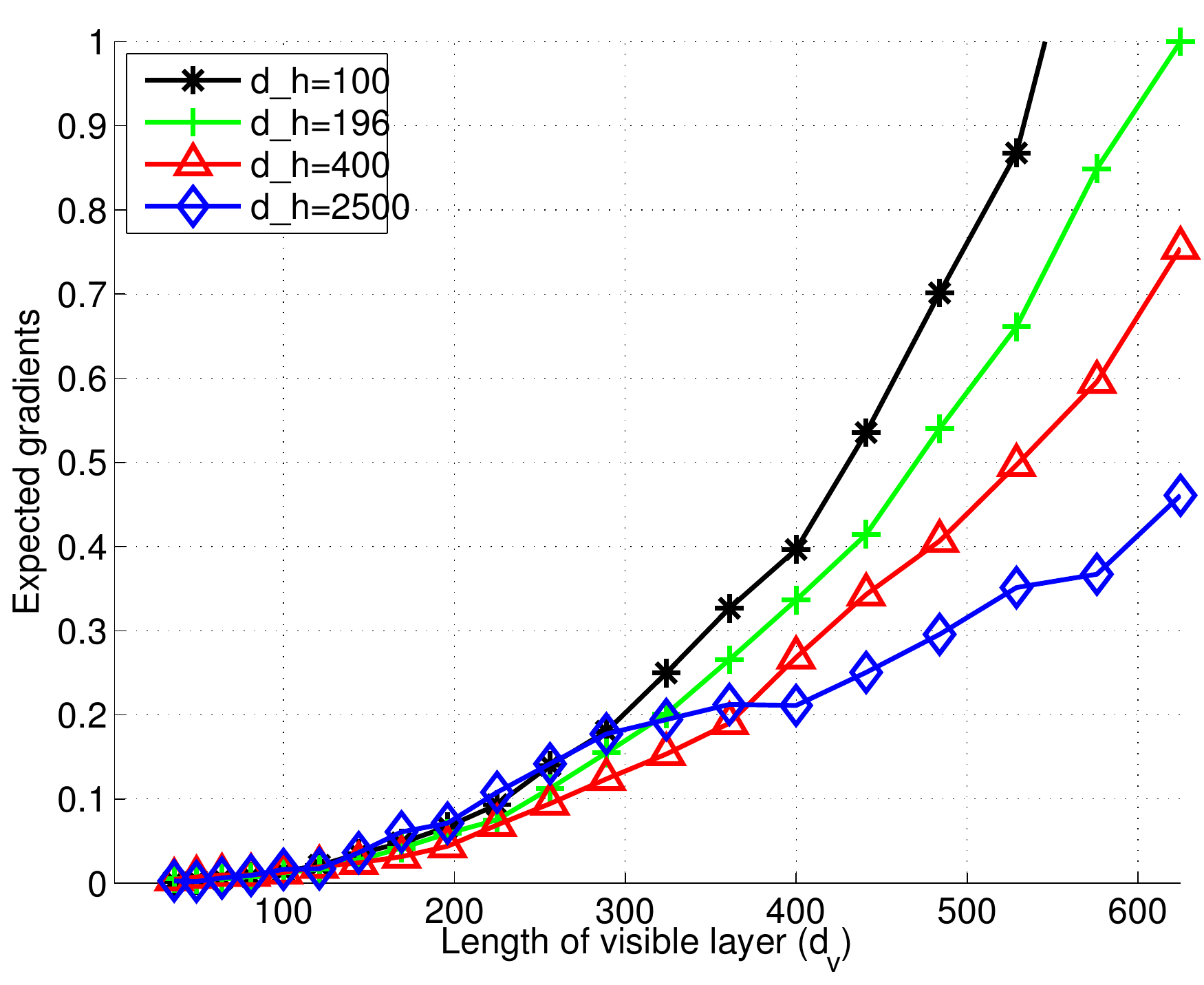}}
\subfloat[{\footnotesize {\it mnist}; \hspace{0.25mm} Expected gradients vs. $B$}]{\includegraphics[width=58mm]{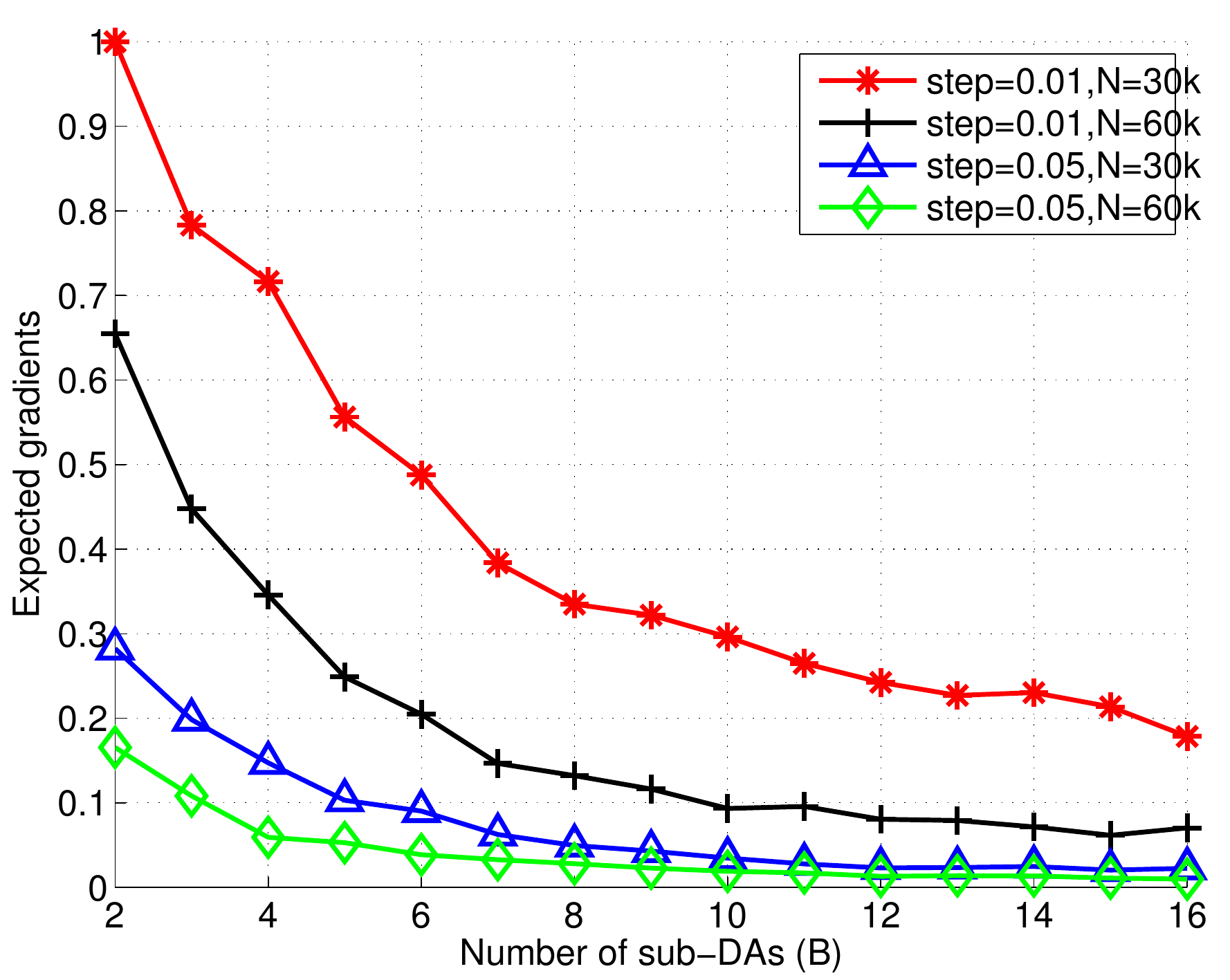}} \\ \vspace{-0.5mm}
\subfloat[{\footnotesize {\it neuro}; \hspace{0.25mm} Expected gradients vs. $N$}]{\includegraphics[width=58mm]{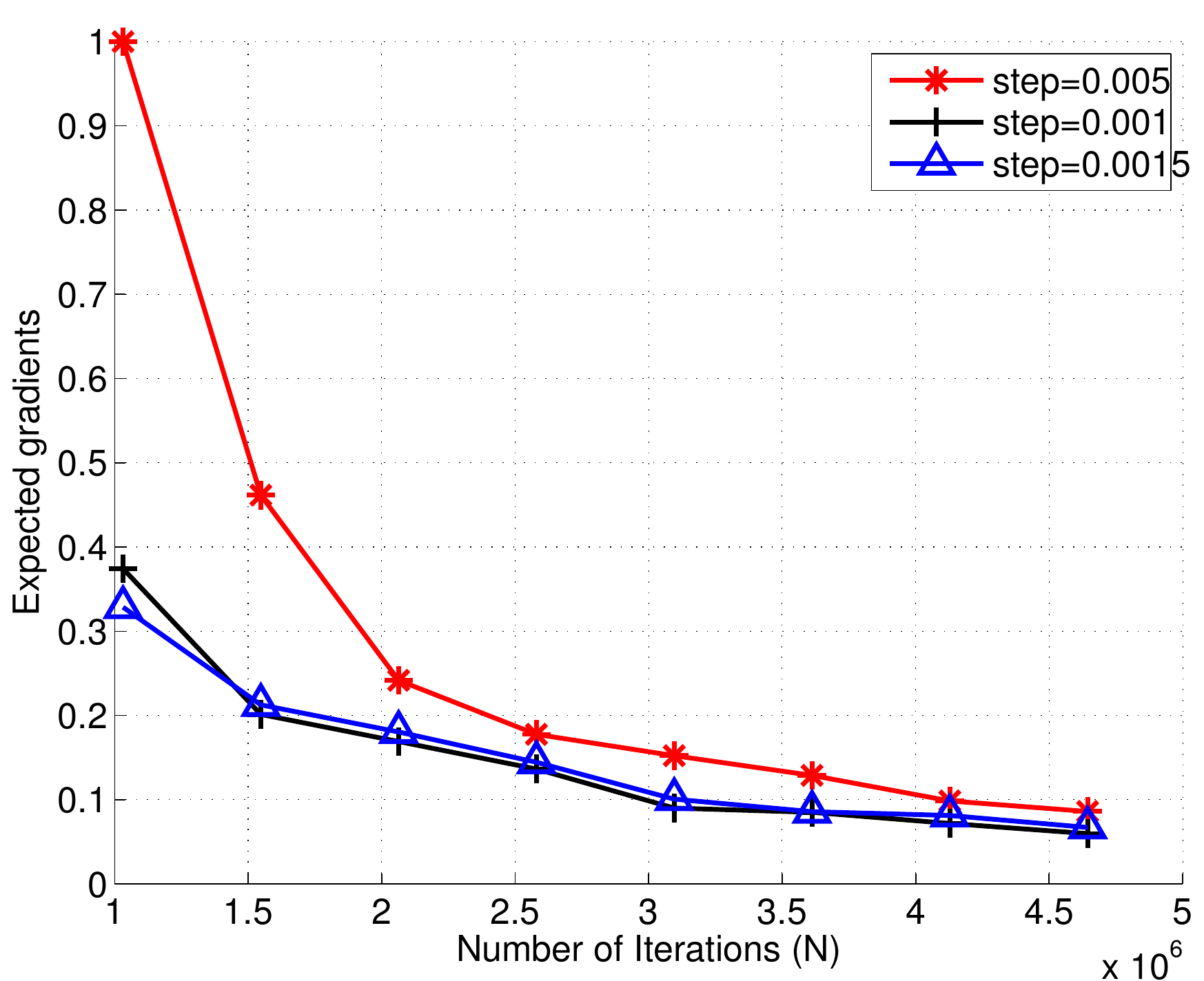}} 
\subfloat[{\footnotesize {\it neuro}; \hspace{0.25mm} Expected gradients vs. $d_v,d_h$}]{\includegraphics[width=58mm]{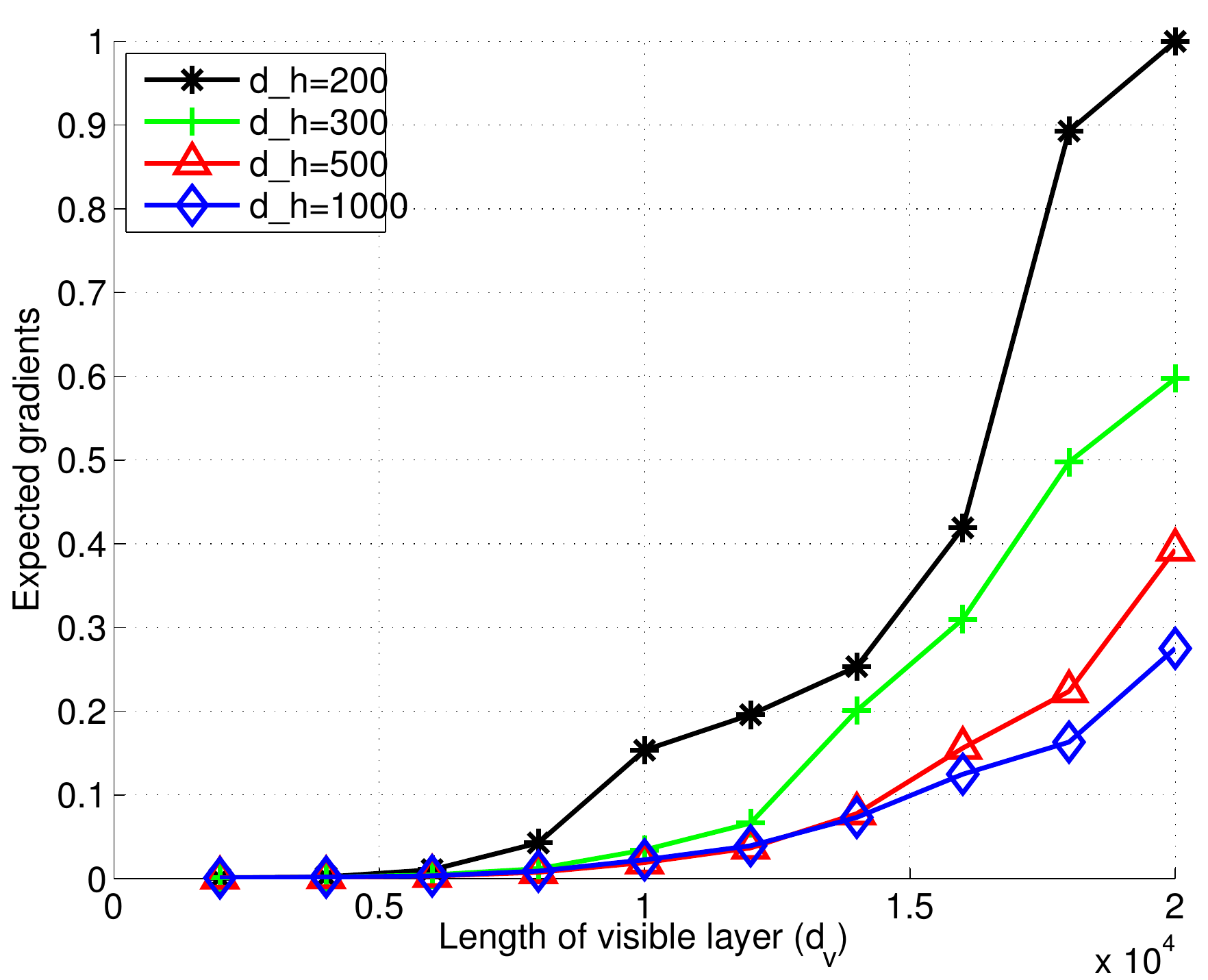}}
\subfloat[{\footnotesize {\it neuro}; \hspace{0.25mm} Expected gradients vs. $B$}]{\includegraphics[width=58mm]{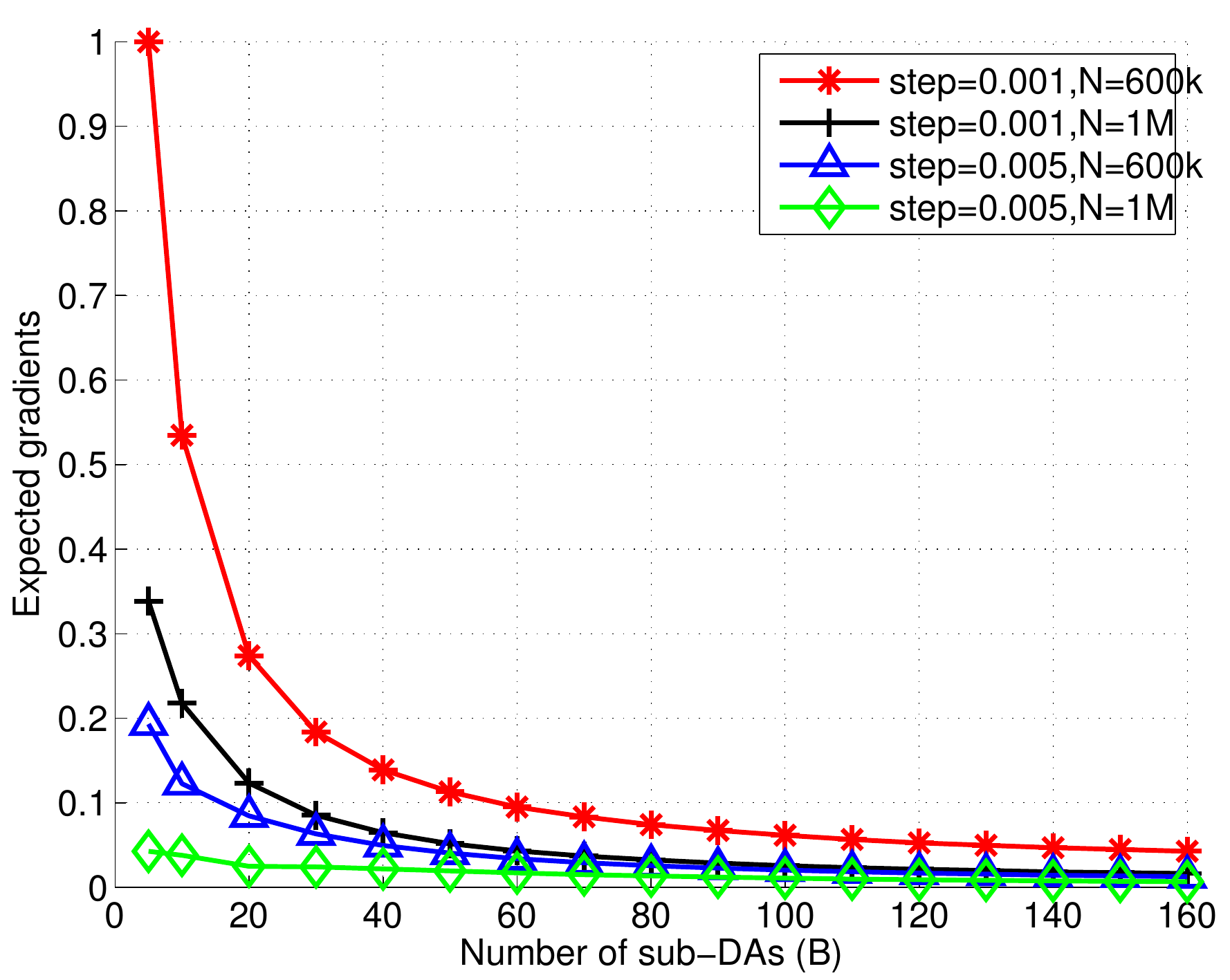}} \\ \vspace{-0.5mm}
\subfloat[{\footnotesize {\it imagenet}; \hspace{0.25mm} Expected gradients vs. $N$}]{\includegraphics[width=58mm]{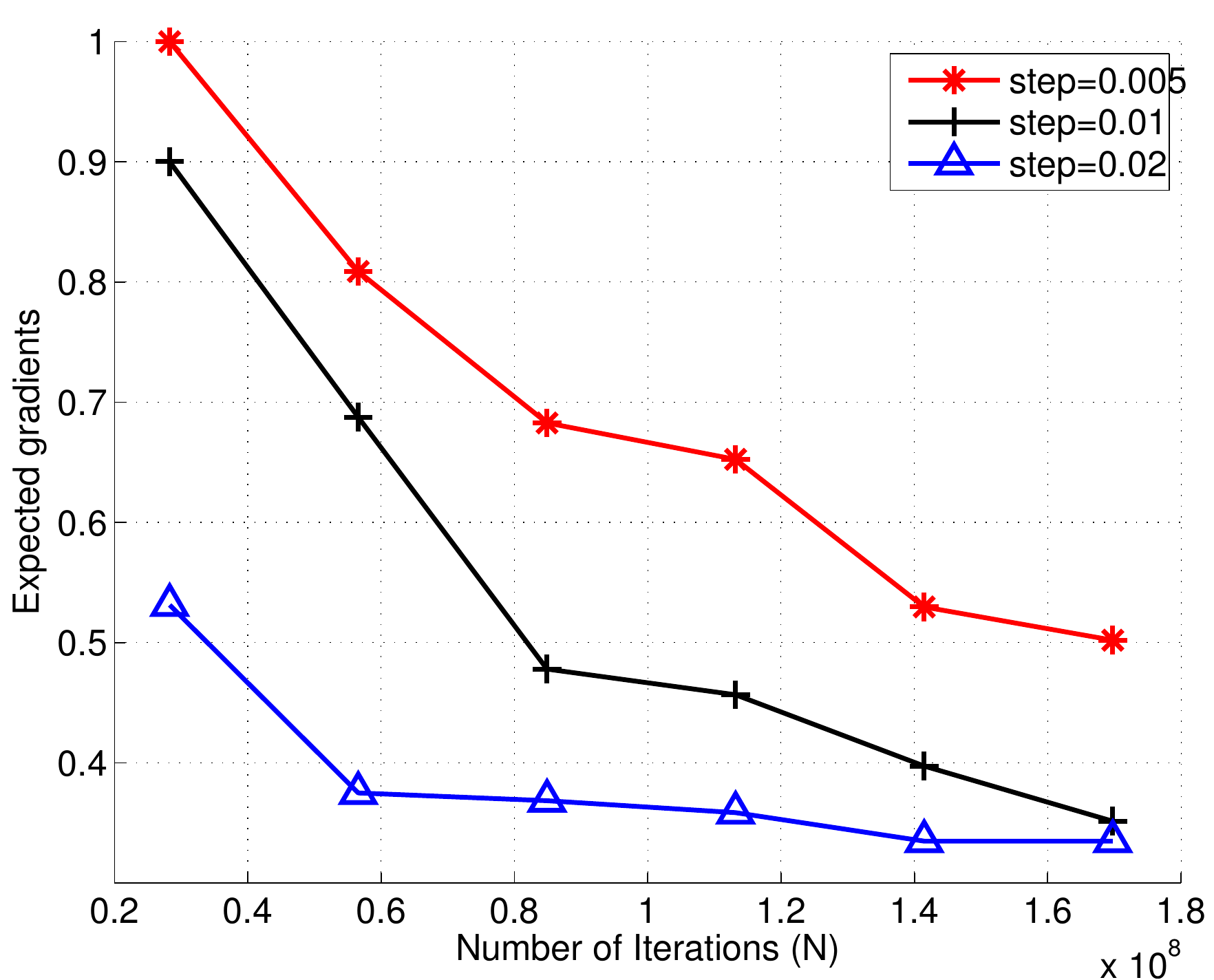}} 
\subfloat[{\footnotesize {\it imagenet}; \hspace{0.25mm} Expected gradients vs. $d_v,d_h$}]{\includegraphics[width=58mm]{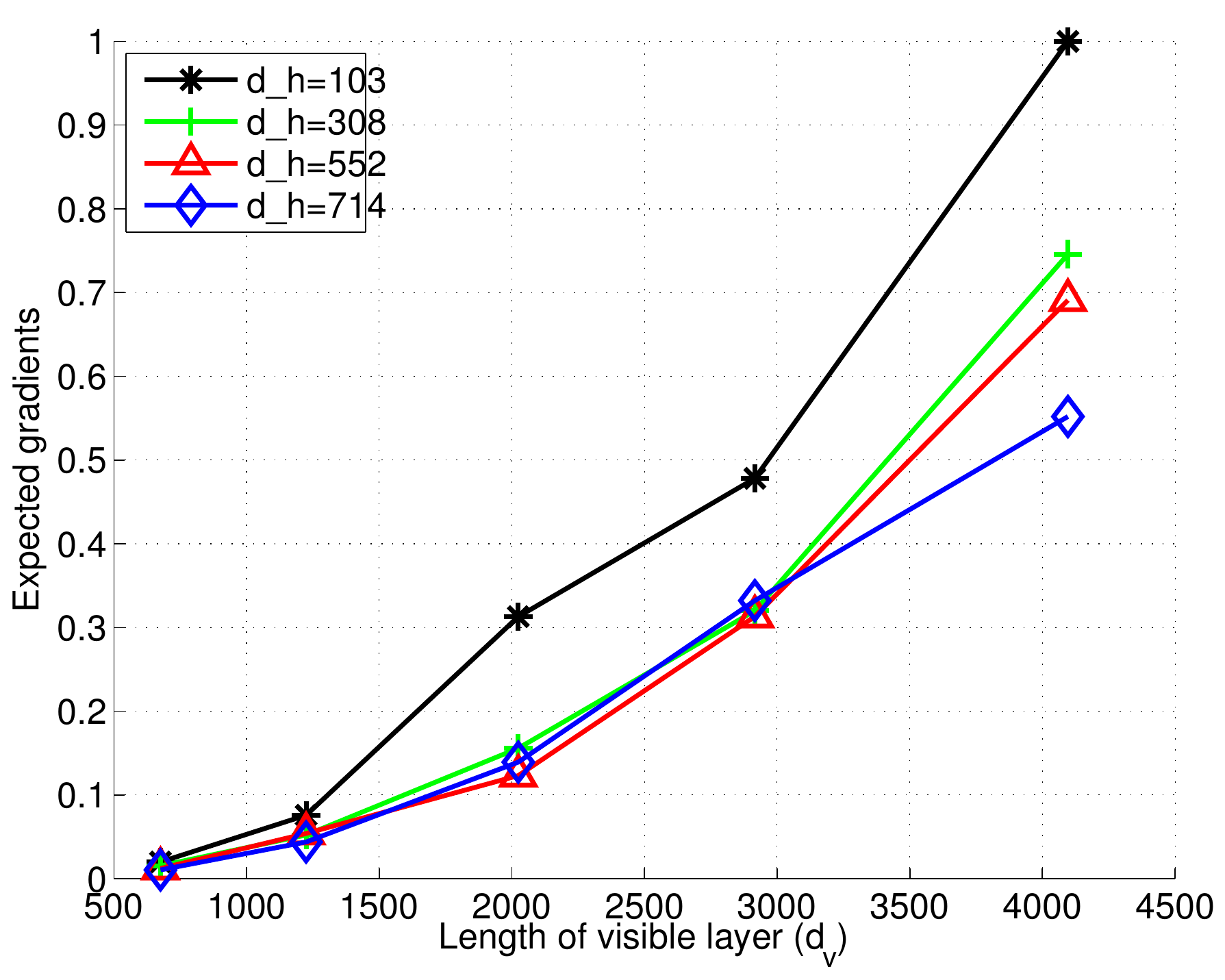}}
\subfloat[{\footnotesize {\it imagenet}; \hspace{0.25mm} Expected gradients vs. $B$}]{\includegraphics[width=58mm]{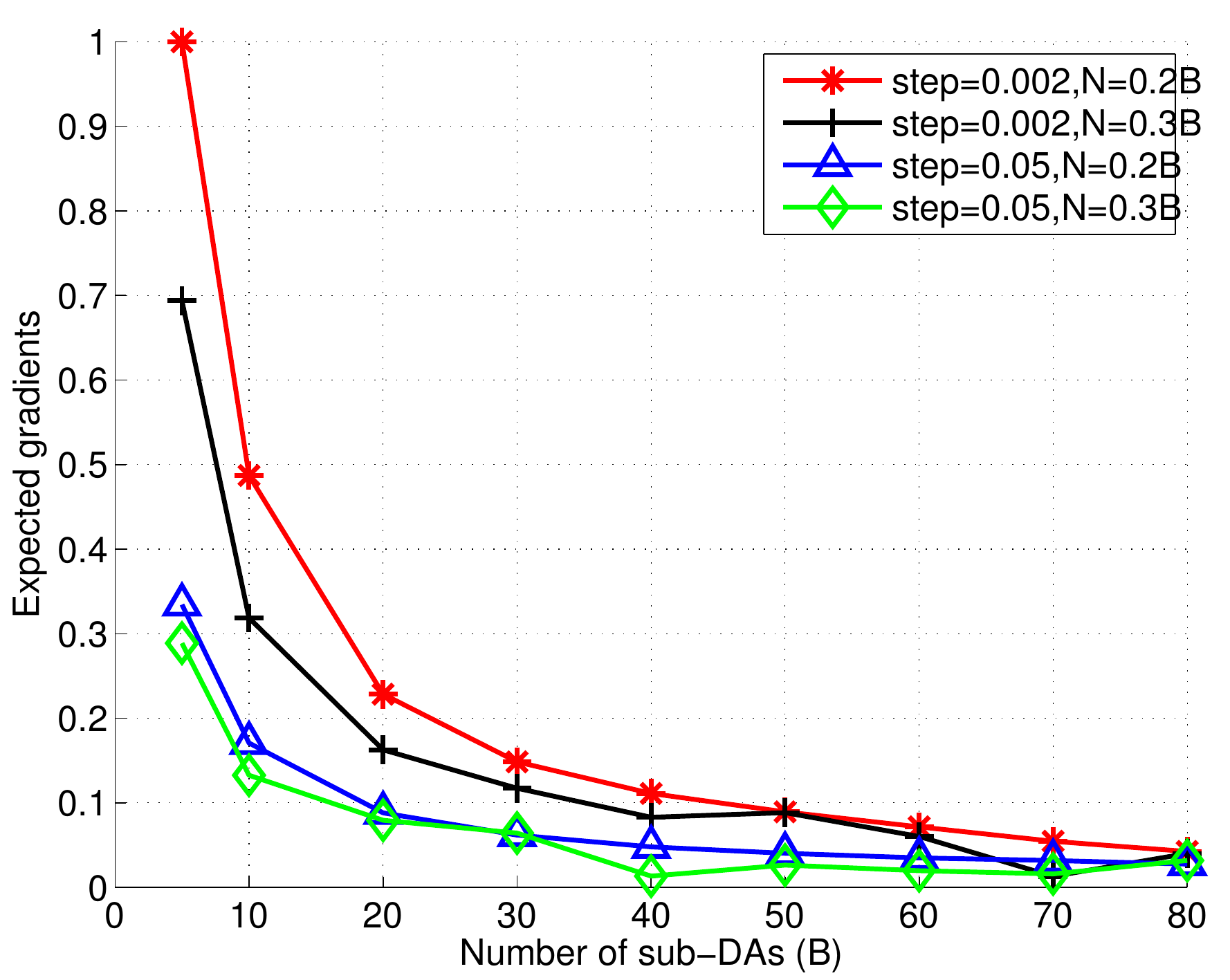}}
\caption{\footnotesize \label{fig:exp} Expected gradients. First (a,d,g) column shows the expected gradients vs the number of $\mathcal{SFO}$ calls $N$, for multiple stepsizes $\gamma$ (corresponding to red, black and blue colors). Second (b,e,h) column shows the expected gradients vs. the size of visible layer $d_v$ for multiple $d_h$s (corresponding to red, blue and black colors). Third column (c,f,i) presents expected gradients vs. the number of sub-DAs ($B>1$) used in a distributed asynchronous setting (for a fixed iterations $N$ and network size $d_h.d_v$). For the results in first and last columns, $d_v$ equals the inherent input data dimensionality (see supplement), and $d_h$ is one-tenth of $d_v$. Top row corresponds to {\it mnist}, second to {\it neuro} and third to {\it imagenet}. All the expected gradients are normalized with the maximum value in the respective plot.}
\end{figure*}

\paragraph{\it Expected gradients vs $N$.}
Figure \ref{fig:exp}(a,d,g) show that the expected gradients decrease as the number of $\mathcal{SFO}$ calls ($N$) increases.
The three curves (red, black, blue) in each plot correspond to different stepsizes.
The expected gradients decrease monotonically for all the curves in Figure \ref{fig:exp}(a,d,g), and their hyperbolic trend as $N$ increases supports the $\mathcal{O}(\frac{1}{\sqrt{N}})$ decay rate presented in (\ref{eq:conv}).
Unlike {\it mnist} and {\it imagenet}, {\it neuro} has stronger correlations across its features, and so shows a decay rate seems to be stronger than $\frac{1}{\sqrt{N}}$ (the red curve in Figure \ref{fig:exp}(d)).
The gradients, in general, also seems to be smaller for larger stepsizes (blue and black curves), which is expected because the local minima are attained faster with reasonably large stepsizes, until the minima are overshot. Supplement shows a plot indicative of this well-known behavior.

\paragraph{\it Expected gradients vs $d_v,d_h$.}
The second column (Figure \ref{fig:exp}(b,e,h)) shows the influence of increasing the length of the visible layer ($d_v$) for multiple $d_h$'s and fixed $N$. 
As suggested by (\ref{eq:conv}), the expected gradients increase as $d_v$ increases. 
This rate of increase (vs. increasing $d_v$ on $x$-axis) seems to be stronger for smaller values of $d_h$ (black and green curves vs. red and blue curves).
Recall that $d_h$ should be ``sufficiently'' large to encode the underlying input data dependencies \cite{paugam1997size, lawrence1998size, bianchini2014nnls}.
Hence the network may under-fit for small $d_h$, and not recover inputs with small error.  
This behavior is seen in Figure \ref{fig:exp}(b,e,h) where initially the expected gradients (across all $d_v$s) gradually decrease as $d_h$ increases (black, green and red curves).
Once $d_h$ is reasonably large, increasing it further tends to increase the expected gradients (as shown by the blue curve which overlaps the others).
$d_h$ may hence be chosen empirically (e.g. using cross validation), so that the network still generalizes to test instances but is not massive (avoids unnecessary computational burden).

\paragraph{\it Does distributed learning help ?}
The last column in Figure \ref{fig:exp} shows the expected gradients in a distributed setting where $x$-axis represents the number of sub-DAs ($B$) into which the whole network is divided.
The number of $B$'s is chosen such that $d_h$ is no larger than twice the size of $d_v$.
Corollary \ref{thm:convdda} presents the bounds with respect to $\tau$ which is the fraction of visible layers used in each of the sub-DAs. 
The results in Figure \ref{fig:exp}(c,f,i) are shown relative to the number of disjoint sub-DAs $B$, which is chosen to be at least $1/\tau$ and follows the conditions in Lemma \ref{thm:dda}. 
Observe that, the expected gradients decay as $B$ increases for all the three datasets considered. 
For a sufficiently large $B$, the decay rate settles down with no further improvement, see Figure \ref{fig:exp}(f,i).
The bounds derived in Section \ref{sec:dist} are based on a synchronous setup. 
In our experiments a central master holds the current updates of the parameters, and the $B$ different sub-DAs pre-train independently on as many as $200$ cores, communicating with the master via message passing. %}
The sub-DAs are initialized by running the whole network (in a non-distributed way) for a few hundred iterations.

Figure \ref{fig:time} shows the time speed-up achieved by distributing the pre-training (relative to the non-distributed setting) on {\it neuro} and {\it imagenet}.
Note that the number of sub-DAs used is equal to the number of cores used, which means one sub-DA is pre-trained per core. 
As the number of cores used increases, the speed-up relative to the non-distributed setting increases rapidly up to a certain limit, and then gradually falls back.
This is because for large values of $B$ the communication time between machines dominates the actual computation time.
The speed-up is much higher for datasets with large number of parameters ($>50$mil, red and black curves vs.$15$mil, blue curve). %, compared to the blue curve (with $\sim 15$mil parameters). 
Note that the distributed setting gives faster convergence and time speed-up, but does not lose out on generalization error (refer to the supplement for a plot confirming this behavior).  
Lastly, these computational (Figure \ref{fig:exp}(c,f,i)) and time speed-up (Figure \ref{fig:time}) improvements of distributed setup are in agreement with existing observations \cite{raina2009large, dean2012large}. 
Overall, the results in Figures \ref{fig:exp} and \ref{fig:time}, in tandem with existing observations \cite{bengio2009learning, erhan2009difficulty, erhan2010does, vincent2010stacked, dean2012large} provide strong empirical support to the convergence and sample size bounds constructed in Sections \ref{sec:uppersamp} and \ref{sec:dist}.

\begin{figure}[!t]\centering
\includegraphics[width=63mm]{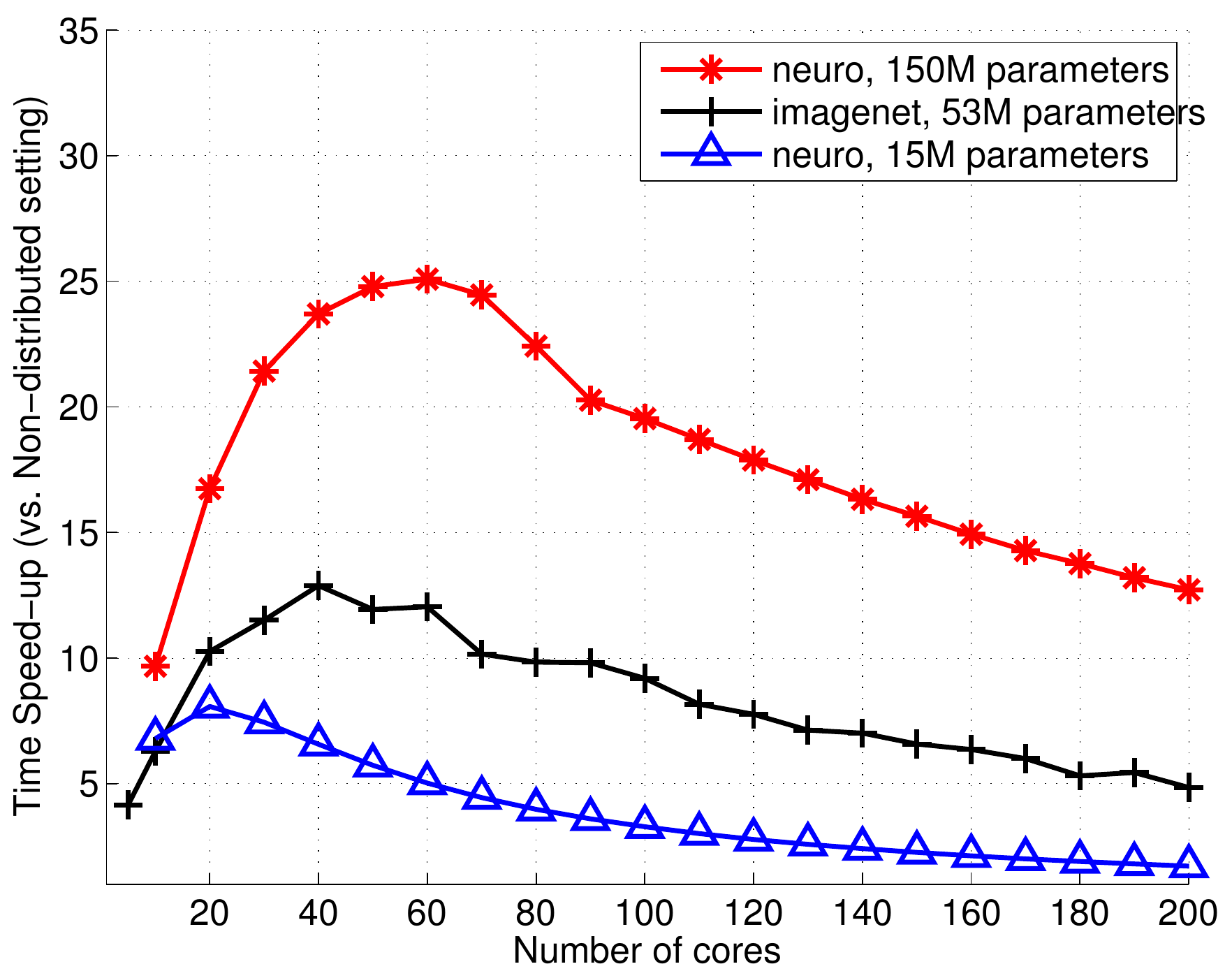}
\caption{\footnotesize \label{fig:time} The relative time speed-up achieved by distributed pre-training (vs. non-distributed) as a function of the number of cores ($x$-axis), which in this experiment is equal to the number $B$ of sub-DAs (see Lemma \ref{thm:dda}) i.e. each of core works on one sub-DA. Curves correspond to different number of parameters. Step-sizes are scaled according to \ref{eq:gammabounds}, while $N$ is fixed for each curve.} 
\end{figure}

\section{Conclusion}
\label{sec:con}

We analyzed the convergence rate and sample size estimates of gradient based learning of deep architectures. 
The only assumption we make is on the Lipschitz continuity of the loss function.
We provided bounds for classical and distributed pre-training for Denoising Autoencoders, and the experiments support the suggested behavior. 
We believe that our results complement a sizable body of work showing the success of empirical pre-training in deep architectures %using gradient based methods, 
and identifies a number of interesting directions for additional improvements -- both on the theoretical side as well as the design of practical large scale pre-training.

\section*{Appendix (Supplementary Material)}

\paragraph{\bf Datasets Description}

The three datasets that were used, {\it mnist}, {\it neuro}, {\it imagenet}, correspond to the small$d$-large$n$, large$d$-small$n$ and large$d$-large$n$ setups respectively ($d$ is the number of data dimensions, $n$ is the number of data instances). 
\begin{itemize}
\item {\it mnist}: This famous digit recognition dataset contains binary images of hand-written digits ($0-9$). We used $10^4$ of these images which are part of the mnist training data set (http://yann.lecun.com/exdb/mnist/). The training data contains approximately equal number of instances for each of the ten classes. Each image is $784$ pixels/dimensions, and the signal in each pixel is binary. No extra preprocessing was done to the data. 
\item {\it neuro}: This neuroimaging dataset is a prototypical example of dataset with very large number of features, but small number of instances. It comprises of Magnetic Resonance Imaging (MRI) data from Alzheimer's Disease Neuroimaging Initiative study from a total of $534$ subjects. Each image is three-dimensional of size $256 \times 256 \times 176$. Each voxel in this 3D space corresponds to water-level intensity in the brain, and the signal is positive scalar. Standard pre-processing is applied on all the images, which involves stripping out grey matter and normalizing to a template space (called MNI space). Refer to Statistical Parametric Mapping Tool (SPM8, http://www.fil.ion.ucl.ac.uk/spm/doc/) for this standardised procedure. The resulting processed images are sorted out according to the signal variance. For the experiments in thie work, we picked out the top (most variant) $25\%$ of the features/voxels, which amounted to $3 \times 10^4$ features. Even within this setting the number of features is much larger than the number of instances available ($534$).  
\item {\it imagenet}: This well-known dataset comprises of natural images from various types of categories collected as apart of WordNet hierarchy. It comprises of more than $14$ Million images, broadly categorized under more than $20$ thousand synsets (http://www.image-net.org/). We used imaging data from five of the largest categories contained in the imagenet database. This amount to $> 7000$ synsets/sub-categories and approximately $5$ million images. As a pre-procesing step, we resized all images to $128 \times 128$ pixels, and centered each of the $16384$ dimensions.  
\end{itemize}

\begin{figure}[!h]\centering
\subfloat[]{\includegraphics[width=60mm]{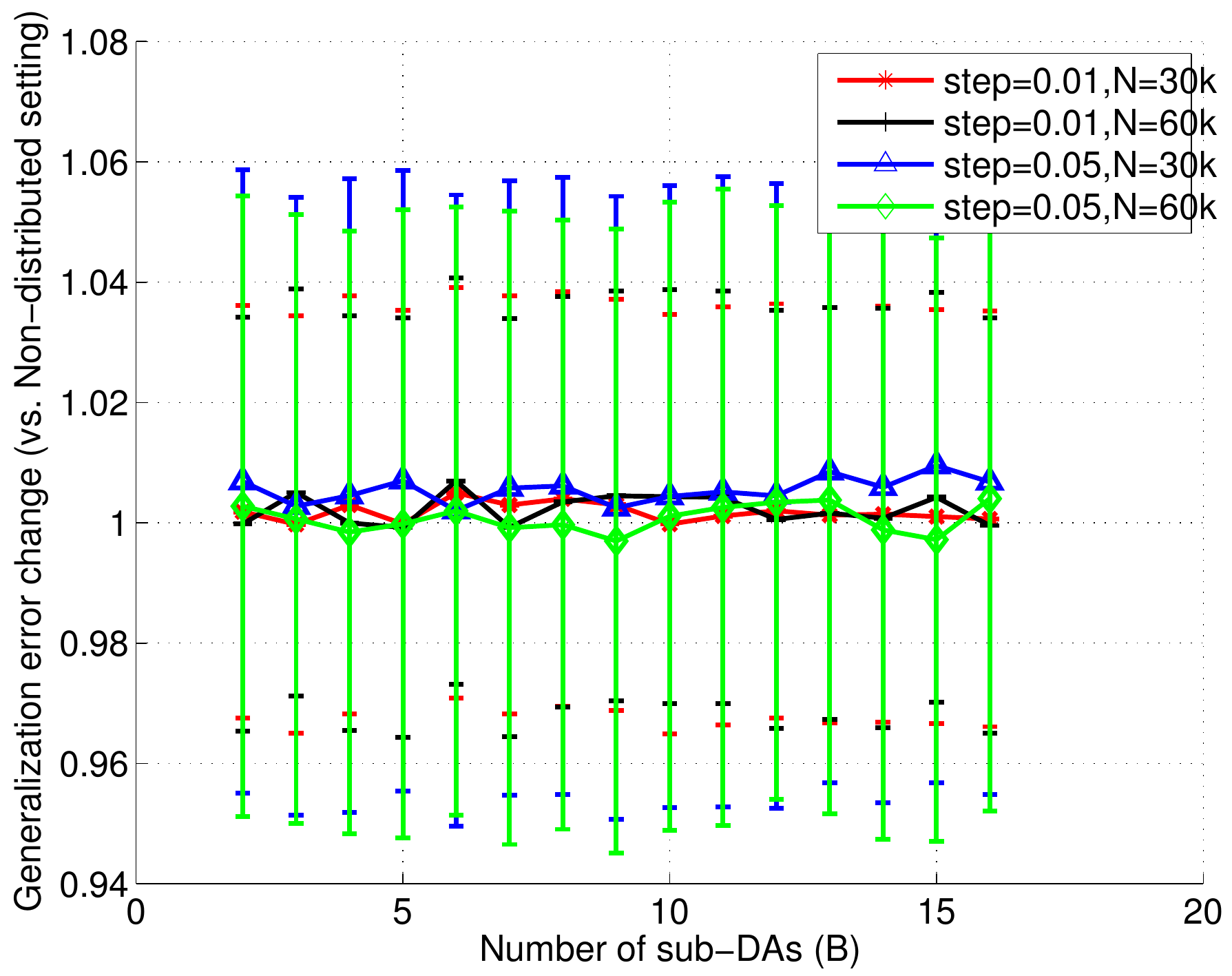}} \quad \quad
\subfloat[]{\includegraphics[width=60mm]{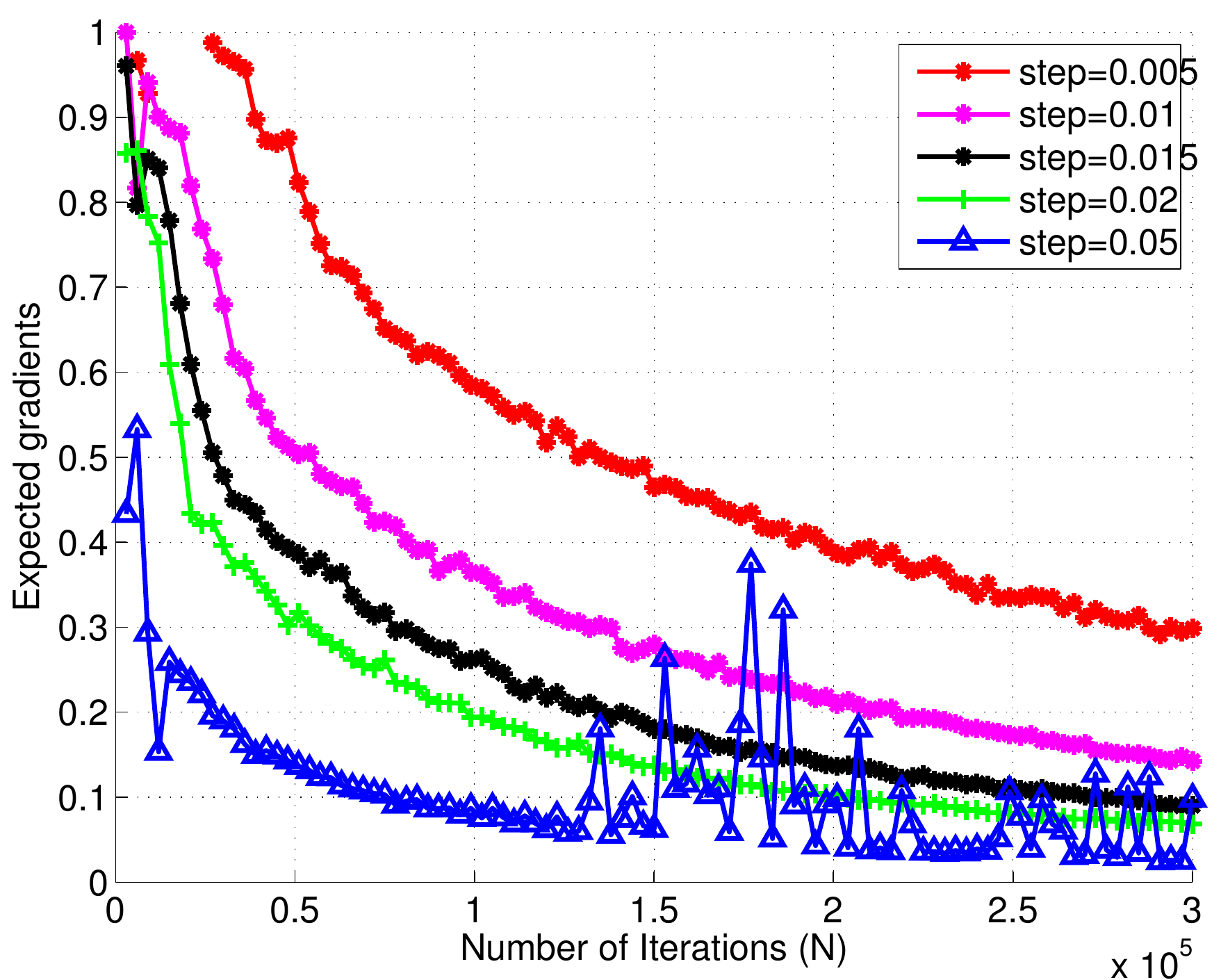}} 
\caption{\footnotesize \label{fig:errors} (a) Distributed setup does not lose on generalization error. The four curves correspond to the ratio of test-set reconstruction errors for distributed pre-training ($B>2$) to the non-distributed case. The error-bars correspond to $10$ fold cv errors computed using $10$ different test-sets.
(b) Expected gradients vs the number of $\mathcal{SFO}$ calls $N$, for multiple stepsizes $\gamma$ (corresponding to the four different colors). The trends show that as stepsize increases the expected gradients decrease, and beyond a resonably large stepsize (gree curve) the gradients overshoot local optima (blue curve).}
\end{figure}

\bibliographystyle{natbib}
\bibliography{pretraining}

\end{document}